\def\usemedgeometry{%
  \usepackage[top=2.54cm,left=3.0cm,right=3.0cm,bottom=2.54cm]{geometry}}
\newcolumntype{C}[1]{>{\centering\arraybackslash}m{#1}}
\theoremstyle{plain}
\newtheorem{theorem}{Theorem}[section]
\newtheorem{lemma}[theorem]{Lemma}
\newtheorem{corollary}[theorem]{Corollary}
\theoremstyle{definition}
\theoremstyle{remark}
\definecolor{darkblue}{rgb}{0,0,.5}
\definecolor{cc}{RGB}{125,0,0}
\definecolor{jd}{RGB}{0,125,0}
\definecolor{ha}{RGB}{0,0,200}
\newcommand{\projm}{\Pi}
\newcommand{\rank}{\mathsf{rank}}
\newcommand{\scov}{\Sigma}
\newcommand{\imp}{_{i\mp}}
\newcommand{\inv}{^{-1}}
\newcommand{\kth}{^{(k)}}
\newcommand{\cov}{\mathsf{Cov}}
\newcommand{\Ep}{\mathbb{E}}
\newcommand{\E}{\mathbb{E}}
\newcommand{\R}{\mathbb{R}} %
\renewcommand{\P}{\mathbb{P}}
\newcommand{\est}[1]{\hat{#1}}
\newcommand{\mc}[1]{\mathcal{#1}}
\newcommand{\brc}[1]{\left\{{#1}\right\}}
\newcommand{\prn}[1]{\left({#1}\right)} %
\newcommand{\prnbig}[1]{\big({#1}\big)} %
\newcommand{\brk}[1]{\left[{#1}\right]} %
\newcommand{\norm}[1]{\left\|{#1}\right\|} %
\newcommand{\normal}{\mathsf{N}}
\newcommand{\ltwo}[1]{\norm{#1}_2} %
\newcommand{\<}{\langle} %
\renewcommand{\>}{\rangle}
\DeclareMathOperator*{\argmin}{argmin}
\newcommand{\simiid}{\stackrel{\textup{iid}}{\sim}}
\newcommand{\cd}{\stackrel{d}{\rightarrow}}
\newcommand{\cp}{\stackrel{p}{\rightarrow}}
\long\def\@makecaption#1#2{
  \vskip 0.8ex
  \setbox\@tempboxa\hbox{\small {\bf #1:} #2}
  \parindent 1.5em  %
  \dimen0=\hsize
  \advance\dimen0 by -3em
  \ifdim \wd\@tempboxa >\dimen0
  \hbox to \hsize{
    \parindent 0em
    \hfil 
    \parbox{\dimen0}{\def\baselinestretch{0.96}\small
      {\bf #1.} #2
    } 
    \hfil}
  \else \hbox to \hsize{\hfil \box\@tempboxa \hfil}
  \fi
}
\newcommand{\defeq}{\coloneqq}
\newcommand{\half}{\frac{1}{2}}
\definecolor{innerboxcolor}{rgb}{.9,.95,1}
\definecolor{outerlinecolor}{rgb}{.6,0,.2}
\newcommand{\opt}{^\star}
\newcommand{\strong}{^\textup{s}}
\newcommand{\collab}{^\textup{clb}}
\newcommand{\impute}{^\textup{imp}}
\newcommand{\imputeglb}{^\textup{imp-glb}}
\newcommand{\footremember}[2]{%
   \footnote{#2}
    \newcounter{#1}
    \setcounter{#1}{\value{footnote}}%
}
\newcommand{\footrecall}[1]{%
    \footnotemark[\value{#1}]%
}
\title{Collaboratively Learning Linear Models with Structured Missing Data}
\author{%
  Chen Cheng\footremember{authorship}{Equal contribution, authors ordered alphabetically by last then first names }\footremember{statdept}{Statistics Department, Stanford University} \\
  \texttt{chencheng@stanford.edu} 
    \and
   Gary Cheng\footrecall{authorship}
   \footremember{eedept}{Electrical Engineering Department, Stanford University}\\
  \texttt{chenggar@stanford.edu} 
  \and
  John Duchi\footrecall{eedept}
  \footrecall{statdept}\\
  \texttt{jduchi@stanford.edu} 
}
\begin{document}

\maketitle

\begin{abstract}

  We study the problem of collaboratively learning least squares estimates for $m$ agents. Each agent observes a different subset of the features---e.g., containing data collected from sensors of varying resolution. Our goal is to determine how to coordinate the agents in order to produce the best estimator for each agent. We propose a distributed, semi-supervised algorithm \textsc{Collab}, consisting of three steps: local training, aggregation, and distribution. Our procedure does not require communicating the labeled data, making it communication efficient and useful in settings where the labeled data is inaccessible. Despite this handicap, our procedure is nearly asymptotically local minimax optimal---even among estimators allowed to communicate the labeled data such as imputation methods. We test our method on real and synthetic data.

\end{abstract}

\section{Introduction}

\looseness=-1
Consider a set of agents that collect data to make predictions, where different agents may collect different features---because of different sensor availability or specialization---but wish to leverage shared structure to achieve better accuracy.
Concretely, suppose we have $m$ agents, where each agent $i\in[m]$ observes $n$ samples of $(x\iplus, y)$ where $x\iplus \in \R^{d_i}$ is some subset of $x\in \R^d$. We set this as a regression problem where the data $(x, y)$ has the linear relationship $y = \<x, \param\> + \noise$ for some noise variable $\noise$.
For example, these agents could be a network of satellites, each collecting data with a distinct set of sensors of varying resolution and specialization, with the purpose of estimating quantities like crop-yields \cite{Sahajpal2020UsingMM}, biomass \cite{Mansaray2020EvaluationOM}, and solar-flare intensity \cite{Jiao2019SolarFI}. Or these agents could be a group of seismic sensors, using acoustic modalities or accelerometers to predict whether an earthquake will occur \cite{Bolton2019CharacterizingAS}. 
Other examples may include networks of hospitals or phones \cite{Kairouz2019AdvancesAO}.
In these settings, the agents can share information to collaboratively train a model; however, they are limited by communication bandwidth constraints, a situation satellites and seismic sensors often face due to radio frequency spectrum scarcity and interference \cite{Curzi2020LargeCO, Physics2015ASF}. 
Without being too rigorous, we will define a communication efficient algorithm as one with communication cost that is sublinear in $n$; this definition is suited for applications with significant data volume but limited communication resources.
Can we construct a statistically optimal and communication efficient procedure to estimate $\param$?

We answer in the affirmative and introduce our estimator \textsc{Collab}. \textsc{Collab} consists of three steps: local training on all agents, aggregation on a coordinating server, and distribution back to all agents. Our algorithm is communication-efficient: each agent $i\in[m]$ syncs twice with a coordinating server and incurs communication cost scaling like $\Theta(d_i^2)$. We prove local minimax lower bounds which prove that \textsc{Collab} is (nearly) instance-optimal. 
We choose to study this problem in a stylized linear setting so that we can provide stronger guarantees for the algorithms we make. Indeed, our results which pair the exact asymptotic covariance of our estimator \textsc{Collab} with matching asymptotic local minimax lower bounds heavily rely on the linearity of our problem and would not be possible without strong structural assumptions. Having said this, the theory we develop for linear models does hint at potential methods for non-linear settings, which we discuss in \Cref{sec:discussion}.
We also acknowledge privacy considerations are important for real world systems such as hospitals. We choose to focus instead on sensor settings where privacy is less of a concern. We leave adapting our results to privacy-sensitive settings to future work.

We compare our methods to single-imputation methods theoretically and empirically. We choose to baseline against imputation methods for three reasons. First, if we ignore communication constraints, our problem is a missing data problem, where formally the data is ``missing at random'' (MAR) \cite{Little2019StatisticalAW}. MAR problems are well studied, so we know that imputation methods work well theoretically and in practice \cite{Schafer2000Inference, Kyureghian2011AMV}. 
Second,
because we have instance-optimal lower bounds, we know that imputation methods are also optimal for our problem. Finally, because imputation methods use more information than the method we propose, imputation will serve as a ``oracle'' baseline of sorts.

\paragraph{Contributions.} We briefly summarize our contributions.
\begin{enumerate}
    \item We design a communication-efficient, distributed learning algorithm \textsc{Collab} which performs a weighted de-biasing procedure on the ordinary least squares estimator of each agent's data. 
    \item We show \textsc{Collab} is asymptotically locally minimax optimal among estimators which have access to the ordinary least squares estimator of each agent's data. We also show that with some additional assumptions, \textsc{Collab} is also asymptotically locally minimax optimal among estimators that have access to \textit{all} of the training data of all agents.
    \item We propose and develop theory for various baseline methods based on imputation. 
    We compare the statistical error and communication cost of \textsc{Collab} against these baseline methods both theoretically and empirically on real and synthetic data.
    \item We discuss generalizations of \textsc{Collab} for non-Gaussian feature settings and non-linear settings. We highlight open problems and identify possible directions for future work.
\end{enumerate}

\subsection{Related Work}

\paragraph{Missing data.}
If we ignore the communication and computational aspects of our problem, the problem we study reduces to one of estimation with missing data. There has been a lot of work on this topic; please see \cite{Little2019StatisticalAW} for an overview. The data in our problem is missing at random (MAR)---the missing pattern does not depend on the value of the data and is known given agent $i$.
There are many approaches to handling missing data such as weighting and model-based methods \cite{Schafer2002MissingDO}.
Most related to our work are methods on single imputation. \citet{Schafer2000Inference} shows imputation with conditional mean is nearly optimal with special corrections applied. More recently, \citet{Chandrasekher2020Imputation} show that single imputation is minimax optimal in the high dimensional setting. Another closely related popular approach is multiple imputation \cite{Schafer1999Multiple, Azur2011MultipleIB}.
Previous work \cite{Tsiatis2006SemiparametricTA, Wang1998LargesampleTF} has shown that multiple imputation in low dimensional settings produces correct confidence intervals under a more general set of assumptions compared to single imputation settings. However, we choose to focus on single imputation methods for two reasons. First, we are interested in estimation error and not confidence intervals, and our lower bounds show that single imputation has optimal estimation error for our setting. Second, in our problem context, multiple imputation would require more rounds of communication and consequently higher communication cost. 
Other methods for missing data include weighting and model-based methods.

\paragraph{Distributed learning.}
Learning with communication constraints is a well studied practical problem. We provide a couple of examples. \citet{Suresh2022CorrelatedQF} study how to perform mean estimation with communication constraints.
\citet{Duchi2014OptimalityGF} develop communication-constrained minimax lower bounds. Distributed convex optimization methods like Hogwild \cite{Recht2011HogwildAL} have also been well studied. However, the works mentioned all concern the no-missing-data regime.
A more relevant subfield of distributed learning is federated learning.
In federated learning, a central server coordinates a collection of client devices to train a machine learning model. Training data is stored on client devices, and due to communication and privacy constraints, clients are not allowed to share their training data with each other or the central server \cite{Kairouz2019AdvancesAO}. 
In the no-missing-features regime, optimization algorithms for federated optimization are well studied.
There is also more theoretical work, which focus on characterizing communication, statistical, and privacy tradeoffs, albeit for a more narrow set of problems such as mean and frequency estimation \cite{Chen2020BreakingTC}.
More related to the missing data regime we consider is cross-silo federated learning \cite{Kairouz2019AdvancesAO} or vertical federated learning \cite{Yang2019FederatedML}. In this paradigm, the datasets on client machines are not only partitioned by samples but also by features. Researchers have studied this problem in the context of trees \cite{Cheng2019SecureBoostAL}, calculating covariance matrices \cite{Karr2009PrivacyPreservingAO}, k-means clustering \cite{Vaidya2003PrivacypreservingKC}, support vector machines \cite{Yu2006PrivacyPreservingSC}, and neural nets \cite{Liu2020ASF}. Most related to our work is \citet{Gascn2017PrivacyPreservingDL, Hardy2017PrivateFL}; they study how to privately perform linear regression in a distributed manner. However, unlike our work, these works focus more on developing algorithms with privacy guarantees rather than statistical ones.

\section{Mathematical model}
\label{sec:linear-model}
We assume we have $m$ agents that observes a subset of the dimensions of the input data $x \in \R^d$. Each agent $i$ has a ``view'' permutation matrix $\projm_i^\top := \begin{bmatrix} \projm\iplus^\top & \projm\imin^\top \end{bmatrix} \in \R^{d \times d}$. $\projm\iplus \in \R^{d_i\times d}$ describes which feature dimensions the agent sees, and $\projm\imin \in \R^{(d-d_i) \times d}$ describes the dimensions the agent does not see. 
For a feature, label pair $(x, y)$, the $i$-th agent has data $(x\iplus, y)$ where $x\iplus \defeq \projm\iplus x\in \R^\di$.
Each agent has $n$ such observations (independent across agents) denoted as a matrix $X\iplus \in R^{n \times \di}$ and vector $y_i\in\R^n$.
We let $X\imin \in \R^{n \times (d- \di)}$ denote the unobserved dimensions of the input data $x$ drawn for the $i$-th agent, and we let $X_i \in \R^{n \times d}$ denote the matrix of input data $x$ drawn for the $i$-th agent, including the dimensions of $x$ unobserved by the $i$-th agent. To simplify discussions in the following sections, for any vector $v \in \R^d$ we use the shorthand $v\iplus = \projm\iplus v$ and $v\imin = \projm\imin v$. Similarly for any matrix $A \in \R^{d \times d}$ we denote by
\begin{align*}
	A\iplus & = \projm\iplus A \projm\iplus^\top ,
	&  A\imin & =  \projm\imin A \projm\imin^\top ,  \\
	A\ipm & = \projm\iplus A \projm\imin^\top ,
	&  A\imp & =  \projm\imin A \projm\iplus^\top .
\end{align*}
For a p.s.d.\ matrix $A$, we let $\norm{x}_A = \< x, Ax\>$. 

We assume the data from the $m$ agents follow the same linear model. The features vectors $x$ comprising the data matrices $X_1, \ldots, X_m$ are i.i.d.\ with zero mean and covariance $\scov \succ 0$. 
We will assume that each agent has knowledge of $\scov\iplus$---e.g., they have a lot of unlabeled data to use to estimate this quantity. The labels generated follow the linear model
\begin{align*}
	y_i = X_i \param + \noise_i, \qquad \noise_i \simiid\ \normal(0, \sigma^2 I_n).
\end{align*} 
Throughout this work we consider a  fixed ground truth parameter $\param$.

\paragraph{Objectives.} We are interested in proposing a method of using the data of the agents to form an estimate $\hparam$ which minimizes the full-feature prediction error on a fresh sample $x \in \R^d$
\begin{align}\label{eqn:global-test-loss}
	\E_{x}[( \< x, \hparam\> - \<x, \param\>)^2] = \|\hparam - \param
    \|_\scov^2.
\end{align}
We are also interested in forming an estimate $\hparam_i$ which minimizes the missing-feature prediction error of a fresh sample $x\iplus \in \R^{d_i}$ for agent $i$---i.e., $x\iplus = \projm\iplus x$ where $x \in \R^d$ is fresh. Define $T_i \defeq \begin{bmatrix} I_{d_i} & \scov\iplus^{-1} \scov\ipm \end{bmatrix} \projm_i$ and the Schur complement $\Gamma\imin \defeq \scov \backslash \scov\iplus \defeq \scov\imin - \scov\imp \scov\iplus^{-1} \scov\ipm$. The local test error is then
\begin{align}\label{eqn:local-test-loss}
	\E_{x}[( \<x\iplus, \hparam_i\> - \<x, \param\>)^2]= \|\hparam_i - T_i \param \|_{\scov\iplus}^2 +  \norm{\param\imin}_{\Gamma\imin}^2
\end{align}

Here, $\norm{\param\imin}_{\Gamma\imin}^2$ is irreducible error. The role of the operator $T_i$ is significant as $T_i \param$ is the best possible estimator for the $i$th agent\footnote{Maybe surprisingly, $T_i \param$ is better than naively selecting the subset of $\param$ corresponding to the features observed by agent $i$---i.e., $\projm_i \param$. This is because $T_i \param$ leverages the correlations between features.}.
Through this paper, we will also highlight the communication costs of the methods we consider. Recall that we would like our methods to have $o(n)$ communication cost.

\section{Our approach} \label{sec:upper-bounds}

We begin by outlining an approach of solving this problem for general feature distributions. 
The general approach is not immediately usable because it requires some knowledge of $\param$, so we need to do some massaging. 
In \Cref{sec:collab}, we show how to circumvent this issue in the Gaussian feature setting and introduce our method \textsc{Collab}.
Adapting the general approach to other non-Gaussian settings is an open problem, but we discuss some potential approaches in \Cref{sec:discussion}.

\subsection{General approach}
Our solution begins with each agent $i$ performing ordinary least squares on their local data
\begin{align*}
	\est{\param}_i = X_{i+}^\dagger y_i \stackrel{\mathrm{(i)}}{=} (X_{i+}^\top X_{i+})^{-1} X_{i+}^\top y_i,
\end{align*} 
where $A^\dagger$ denotes the Moore–Penrose inverse for a general matrix $A$, and (i) holds whenever $\rank(X_{i+}) \geq d_i$. Because we focus on the large sample asymptotics regime ($n \gg d_i$), (i) will hold with probability $1$.

Then, we aggregate $\hat{\theta}$ using a form of weighted empirical risk minimization parameterized by the positive definite matrices $W_i \in \R^{d_i \times d_i}$
\begin{align} \label{eq:defn-weighted-avg-est-param}
	\est{\param} = \est{\param}(W_1, \cdots, W_m) \defeq \argmin_\param \sum_{i=1}^m \norm{\param\iplus + \scov\iplus^{-1} \scov\ipm \param\imin - \est{\param}_i }_{W_i}^2.
\end{align}
We know by first order stationarity that 
$\est{\param} = \prnbig{\sum_{i=1}^m T_i^\top W_i T_i}^{-1} \prnbig{\sum_{i=1}^m T_i^\top W_i \est{\param}_i}$. 
$\est{\param}$ is a consistent estimate of $\param$ regardless the choice of weighting matrices $W_i$. Furthermore, if the features $X_i$, $\est{\param}$ are Gaussian, $\est{\param}$ is also unbiased. 
We show this result in the Appendix in \Cref{lem:est-param-consistency}.
While \Cref{lem:est-param-consistency} shows that the choice of weighting matrices $W_i$ does not affect consistency, the choice of weighting matrices $W_i$ does dictate the asymptotic convergence rate of the estimator. In the next theorem, we show what the best performing choice of weighting matrices are. 
The proof is in \Cref{sec:proof-upper-bound}.

\newcommand{\gauss}{^{\textup{g}}}
\begin{theorem} \label{thm:upper-bound}
	For any weighting matrices $W_i$, the aggregated estimator $\est{\param} = \est{\param}(W_1, \cdots, W_m)$ is asymptotically normal
	\begin{align*}
		\sqrt{n} \prn{\est{\param} - \param} = \normal(0, C(W_1, \cdots, W_m)),
	\end{align*}
	with some covariance matrix $C(W_1, \cdots, W_m)$. The optimal choice of weighting matrices is
	$$W_i\opt \defeq \scov\iplus (\Ep \brk{x\iplus  \param\imin^\top z\iplus  z\iplus^\top  \param\imin x\iplus^\top} + \sigma^2 \scov\iplus)^{-1} \scov\iplus,$$ where $z\iplus = x\imin - \scov\imp \scov\iplus^{-1} x\iplus$. In particular, 
	for all $W_i$, 
		$C(W_1, \cdots, W_m) \succeq C(W_1\opt, \cdots, W_m\opt) =\prn{\sum_{i=1}^m T_i^\top  W_i\opt   T_i}^{-1}$.
		
\end{theorem}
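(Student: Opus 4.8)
The plan is to derive the joint asymptotic law of the local OLS estimators $\est{\param}_1,\dots,\est{\param}_m$, push it through the linear aggregation map to obtain $C(W_1,\dots,W_m)$, and then minimize $C$ in the Loewner order by a Gauss--Markov argument, identifying $W_i\opt$ as the inverse asymptotic covariance of $\est{\param}_i$.

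\emph{Step 1 (local CLT).} Decompose $y_i = X\iplus\param\iplus + X\imin\param\imin + \noise_i$ and write the population regression of the unseen block onto the seen one as $X\imin = X\iplus\scov\iplus^{-1}\scov\ipm + Z\iplus$, so that the rows of $Z\iplus$ are i.i.d.\ copies of $z\iplus = x\imin - \scov\imp\scov\iplus^{-1}x\iplus$ with $\Ep[x\iplus z\iplus^\top]=0$. Substituting into $\est{\param}_i = (X\iplus^\top X\iplus)^{-1}X\iplus^\top y_i$ and using $T_i\param = \param\iplus + \scov\iplus^{-1}\scov\ipm\param\imin$ yields the exact identity
\begin{align*}
  \sqrt n\,(\est{\param}_i - T_i\param) = \prnbig{\tfrac1n X\iplus^\top X\iplus}^{-1}\cdot\tfrac{1}{\sqrt n}\sum_{j=1}^n x\iplus^{(j)}\prnbig{(z\iplus^{(j)})^\top\param\imin + \noise_i^{(j)}}.
\end{align*}
The prefactor converges a.s.\ to $\scov\iplus^{-1}$, and the i.i.d.\ summands have mean zero (since $\Ep[x\iplus z\iplus^\top]=0$ and $\noise_i$ is independent of $x\iplus$ and centered) and covariance $\Omega_i := \Ep[x\iplus\param\imin^\top z\iplus z\iplus^\top\param\imin x\iplus^\top] + \sigma^2\scov\iplus$ (the $z\iplus$--$\noise_i$ cross term vanishes by independence). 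Hence by the CLT and Slutsky, $\sqrt n(\est{\param}_i - T_i\param)\cd\normal(0,V_i)$ with $V_i := \scov\iplus^{-1}\Omega_i\scov\iplus^{-1}$; in particular $V_i^{-1} = \scov\iplus\Omega_i^{-1}\scov\iplus = W_i\opt$.

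\emph{Step 2 (aggregate CLT).} Independence across agents makes $(\sqrt n(\est{\param}_i - T_i\param))_{i\le m}$ jointly asymptotically normal with block-diagonal limit covariance $\mathrm{diag}(V_1,\dots,V_m)$. From the first-order condition $\est{\param} = A^{-1}\sum_i T_i^\top W_i\est{\param}_i$ with $A := \sum_i T_i^\top W_i T_i$, and the identity $A\param = \sum_i T_i^\top W_i(T_i\param)$, one gets $\sqrt n(\est{\param}-\param) = A^{-1}\sum_i T_i^\top W_i\cdot\sqrt n(\est{\param}_i - T_i\param)$, so the continuous mapping theorem gives $\sqrt n(\est{\param}-\param)\cd\normal(0,C(W_1,\dots,W_m))$ with
\begin{align*}
  C(W_1,\dots,W_m) = A^{-1}\prn{\sum\nolimits_{i=1}^m T_i^\top W_i V_i W_i T_i}A^{-1}.
\end{align*}
(Invertibility of $A$ is the non-degeneracy hypothesis under which $\est{\param}$ is defined; cf.\ \Cref{lem:est-param-consistency}. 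If the $W_i$ are replaced by consistent estimators the limit is unchanged, by Slutsky.)

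\emph{Step 3 (optimal weighting).} Write $\est{\param}(W_1,\dots,W_m) = \sum_i B_i\est{\param}_i$ with $B_i := A^{-1}T_i^\top W_i$; since $\sum_i B_i T_i = A^{-1}A = I_d$, every weighting produces an asymptotically unbiased linear combination of the independent estimators $\est{\param}_i$ of $T_i\param$, with limit covariance $\sum_i B_i V_i B_i^\top$. For any $\{B_i\}$ with $\sum_i B_i T_i = I_d$, the Gauss--Markov (Aitken) identity — expand $\sum_i B_i V_i B_i^\top$ around the GLS weights $\wt{B}_i := (\sum_j T_j^\top V_j^{-1}T_j)^{-1}T_i^\top V_i^{-1}$ and observe that the cross term $\sum_i \wt{B}_i V_i(B_i-\wt{B}_i)^\top$ vanishes using $\sum_i B_i T_i = I_d$ — gives $\sum_i B_i V_i B_i^\top \succeq (\sum_i T_i^\top V_i^{-1}T_i)^{-1}$. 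Applied with $B_i = A^{-1}T_i^\top W_i$ this shows $C(W_1,\dots,W_m)\succeq (\sum_i T_i^\top V_i^{-1}T_i)^{-1}$ for every choice of weights; substituting $W_i = W_i\opt = V_i^{-1}$ into the formula for $C$ cancels all $V_i$'s and collapses it to $(\sum_i T_i^\top V_i^{-1}T_i)^{-1} = (\sum_i T_i^\top W_i\opt T_i)^{-1}$, so the bound is attained.

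\emph{Main obstacle.} The crux is Step 1: cleanly separating the two noise sources — the ``missing-feature'' term $Z\iplus\param\imin$ and the label noise $\noise_i$ — and checking that they contribute additively to the limiting variance, which is precisely what pins $\Omega_i$, and hence $W_i\opt$, to its stated form. This rests on the orthogonality $\Ep[x\iplus z\iplus^\top]=0$ of the residual decomposition, the independence of $\noise_i$, and mild moment control (finite fourth moments of $x$) so that the CLT applies to the products $x\iplus(z\iplus^\top\param\imin)$. Steps 2 and 3 are then routine: a sandwich-covariance computation and the standard Gauss--Markov optimality argument.
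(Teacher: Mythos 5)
Your proposal is correct and follows essentially the same route as the paper's proof: the same decomposition of $\sqrt{n}\prn{\est{\param}_i - T_i\param}$ into the empirical Gram inverse times a sum of mean-zero terms $x\iplus\prnbig{z\iplus^\top\param\imin + \noise_i}$ using the orthogonality $\Ep[x\iplus z\iplus^\top]=0$, the same limiting covariance $Q_i = \Ep\brk{x\iplus\param\imin^\top z\iplus z\iplus^\top\param\imin x\iplus^\top} + \sigma^2\scov\iplus$, and the same sandwich formula for $C(W_1,\cdots,W_m)$. The only divergence is the final Loewner-order inequality, where you expand around the GLS weights and kill the cross term via $\sum_i B_i T_i = I_d$ (the classical Gauss--Markov/Aitken argument), whereas the paper writes each summand as a p.s.d.\ $2\times 2$ block outer product and takes a Schur complement of the sum --- these are equivalent formulations of the same matrix Cauchy--Schwarz inequality, and your version is arguably the more transparent of the two (you also make explicit the fourth-moment condition on $x$ that the paper leaves implicit).
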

The main challenge of using \Cref{thm:upper-bound} is in constructing the optimal weights $W_i\opt$, as at face value, they depend on knowledge of $\param$. While we will discuss high level strategies of bypassing this issue in non-Gaussian data settings in \Cref{sec:discussion}, we will currently focus our attention on how we can make use of Gaussianity to construct our estimator \textsc{Collab}.

\subsection{\textsc{Collab} Estimator - Gaussian feature setting}\label{sec:collab}

\SetKwComment{Comment}{/* }{ */}
\begin{algorithm}[t]
\caption{\textsc{Collab} algorithm}\label{alg:collab}
\KwData{$m$ agents with training data $(X_{1+}, y_1), \ldots, (X_{m+}, y_m)$ each with $n$ datapoints}
\For{Each agent $i=1, \ldots, m$ in parallel}{
  Compute $\hparam_i = (X_{i+}^\top X_{i+})^{-1} X_{i+}^\top y_i$\;
  Compute $\hscov_i = \frac{1}{n} X\iplus^\top X\iplus$ or with additional unlabeled data\;
  Compute $R_i = \frac{1}{n}\|X\iplus  \hparam_i - y\|_2^2$\;
  Send $\hparam_i, \hscov_i, R_i$ to central server\;
}
Central server constructs $\hat{W}_i\gauss \defeq \hscov\iplus / R_i$\;
Central server computes $\est{\param}_i\collab = T_i \est{\param}(\hat{W}_1\gauss, \cdots, \hat{W}_m\gauss)$ and distributes them to respective agents\;
\end{algorithm}

If $X_i$ are distributed as $\normal(0, \scov)$, $W_i\opt$ has an explicit closed form as
\begin{align*}
	W_i \opt =W_i\gauss \defeq \frac{\scov\iplus}{\norm{\param\imin}_{\Gamma\imin}^2 + \sigma^2} = \frac{\scov\iplus}{\E_{x, y}[(\< x\iplus, \hparam_i\> - y)^2]},
\end{align*}
where $\Gamma\imin =  \scov\imin - \scov\imp \scov\iplus^{-1} \scov\ipm$ is the Schur complement. 
Recall we assume that each agent has enough unlabeled data to estimate $\scov\iplus$. Furthermore, $\frac{1}{n}\|X\iplus  \hparam_i - y\|_2^2$ is a consistent estimator of $\E_{x, y}[(\< x\iplus, \hparam_i\> - y)^2]$.
Thus, each agent is able to construct estimates of $W_i\gauss$ by computing
\begin{align*}
	\hat{W}_i\gauss \defeq \frac{\scov\iplus}{\frac{1}{n}\|X\iplus  \hparam_i - y\|_2^2}
\end{align*}

Now we construct our global and local \textsc{Collab} estimators defined respectively as
\begin{align}\label{eqn:collab-est}
	\begin{split}
		\est{\param}\collab \defeq \est{\param}(\hat{W}_1\gauss, \cdots, \hat{W}_m\gauss), \qquad\quad
		\est{\param}_i\collab \defeq T_i \est{\param}(\hat{W}_1\gauss, \cdots, \hat{W}_m\gauss).
	\end{split}
\end{align}
We summarize the \textsc{Collab} algorithm in Algorithm~\ref{alg:collab}.
At a high level, $\est{\param}\collab$ is an estimate of $\param$ which also minimizes the full-feature prediction error \eqref{eqn:global-test-loss} and $\est{\param}_i\collab$ minimizes the missing-feature prediction error for agent $i$ \eqref{eqn:local-test-loss}.
Now we show that using the collective ``biased wisdom'' of local estimates $\est{\param}_i$, our collaborative learning approach returns an improved local estimator. The proof is in \Cref{sec:proof-upper-bound-local}.

\begin{corollary}  \label{cor:upper-bound-local}
	Let $X_i \sim \normal(0, \scov)$ and define $C\gauss \defeq (\sum_{i=1}^m T_i^\top  W_i\gauss   T_i)^{-1}$. The global \textsc{Collab} estimator $\est{\param}_i\collab$ and the local $\est{\param}_i\collab$ on agent $i$ are asymptotically normal
	\begin{align*}
		\sqrt{n} \prn{\est{\param}\collab - \param} \cd \normal\left(0, C\gauss \right) \quad \text{and}\quad 
		\sqrt{n} \prn{\est{\param}_i\collab -  T_i \param} \cd \normal\left(0, T_i C\gauss T_i^\top\right).
	\end{align*}
	The following are true
	\begin{itemize}
		\item[(i)] $W_i\gauss$ are the optimal choice of weighting matrices i.e.,particular, $C(W_1, \cdots, W_m) \succeq C(W_1\gauss, \cdots, W_m\gauss) =C\gauss$.
		\item[(ii)] On agent $i$, we have $\sqrt{n}(\est{\param}_i - T_i \param) \cd \normal(0, (W_i\gauss)^{-1})$. The asymptotic variance of $\est{\param}_i$ is larger than that of the \textsc{Collab} estimator $\est{\param}_i\collab$---i.e., $(W_i\gauss)^{-1} \succeq T_i C\gauss T_i^\top$.
	\end{itemize}
\end{corollary}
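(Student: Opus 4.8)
The plan is to derive everything from \Cref{thm:upper-bound} and the closed-form weights $W_i\gauss$, treating the corollary as essentially a specialization plus two comparison inequalities. First I would establish the asymptotic normality claims: \Cref{thm:upper-bound} already gives $\sqrt n(\est\param(W_1,\dots,W_m) - \param) \cd \normal(0, C(W_1,\dots,W_m))$ for fixed weights, and since $\est\param\collab$ uses the \emph{estimated} weights $\hat W_i\gauss$, I would argue that because $\hat W_i\gauss \cp W_i\gauss$ (the covariance estimate $\hscov\iplus$ and the residual $R_i = \frac1n\|X\iplus\hparam_i - y\|_2^2$ are both consistent, as noted in the text) and the map from weights to the closed-form estimator $(\sum T_i^\top W_i T_i)^{-1}(\sum T_i^\top W_i \hparam_i)$ is smooth near the limit, Slutsky's lemma lets us replace $\hat W_i\gauss$ by $W_i\gauss$ without changing the limiting distribution. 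This yields $\sqrt n(\est\param\collab - \param) \cd \normal(0, C\gauss)$ with $C\gauss = C(W_1\gauss,\dots,W_m\gauss)$. The local statements then follow immediately by the continuous mapping theorem applied to the linear map $T_i$: $\sqrt n(\est\param_i\collab - T_i\param) = T_i \sqrt n(\est\param\collab - \param) \cd \normal(0, T_i C\gauss T_i^\top)$.

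For part (i), the key step is to verify that the explicit Gaussian formula $W_i\gauss = \scov\iplus / (\norm{\param\imin}_{\Gamma\imin}^2 + \sigma^2)$ actually equals the optimal $W_i\opt$ from \Cref{thm:upper-bound}. This reduces to computing $\Ep[x\iplus \param\imin^\top z\iplus z\iplus^\top \param\imin x\iplus^\top]$ when $x \sim \normal(0,\scov)$. Here $z\iplus = x\imin - \scov\imp\scov\iplus^{-1}x\iplus$ is, under Gaussianity, independent of $x\iplus$ with covariance $\Gamma\imin$ (it is the residual of the regression of $x\imin$ on $x\iplus$). Hence $\param\imin^\top z\iplus$ is a scalar Gaussian independent of $x\iplus$ with variance $\param\imin^\top \Gamma\imin \param\imin = \norm{\param\imin}_{\Gamma\imin}^2$, so the expectation factors as $\Ep[(\param\imin^\top z\iplus)^2]\,\Ep[x\iplus x\iplus^\top] = \norm{\param\imin}_{\Gamma\imin}^2 \scov\iplus$. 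Substituting into $W_i\opt = \scov\iplus(\norm{\param\imin}_{\Gamma\imin}^2\scov\iplus + \sigma^2\scov\iplus)^{-1}\scov\iplus$ collapses to $\scov\iplus/(\norm{\param\imin}_{\Gamma\imin}^2 + \sigma^2)$, which is $W_i\gauss$. The optimality inequality $C(W_1,\dots,W_m) \succeq C\gauss$ is then just the conclusion of \Cref{thm:upper-bound} restated, and the identity $C\gauss = (\sum_i T_i^\top W_i\gauss T_i)^{-1}$ matches the theorem's formula for the optimal covariance.

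For part (ii), I first need the marginal asymptotics of the naive local estimator: $\est\param_i = X\iplus^\dagger y_i$ is ordinary least squares of $y$ on $x\iplus$, whose population target is exactly $\scov\iplus^{-1}\Ep[x\iplus y] = \scov\iplus^{-1}\scov\iplus\cdot(\text{stuff}) = T_i\param$ (one checks $\Ep[x\iplus y] = \Ep[x\iplus x^\top]\param = \begin{bmatrix}\scov\iplus & \scov\ipm\end{bmatrix}\projm_i\param$, and $\scov\iplus^{-1}$ times this is $T_i\param$). Standard OLS asymptotics give $\sqrt n(\est\param_i - T_i\param) \cd \normal(0, \scov\iplus^{-1}\Ep[x\iplus(x\iplus{}^\top T_i\param - y)^2 x\iplus{}^\top... ])$; computing the sandwich and using that the residual variance is $\Ep[(\<x\iplus,T_i\param\> - y)^2] = \norm{\param\imin}_{\Gamma\imin}^2 + \sigma^2$ (the local test error from \eqref{eqn:local-test-loss} evaluated at the optimum, plus Gaussian independence to kill cross terms) gives asymptotic covariance $(\norm{\param\imin}_{\Gamma\imin}^2 + \sigma^2)\scov\iplus^{-1} = (W_i\gauss)^{-1}$. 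Finally, the comparison $(W_i\gauss)^{-1} \succeq T_i C\gauss T_i^\top$: since $C\gauss = (\sum_{j} T_j^\top W_j\gauss T_j)^{-1} \preceq (T_i^\top W_i\gauss T_i)^{-1}$ by dropping the nonnegative terms $j\neq i$ (monotonicity of inverse on the p.s.d.\ cone), I would conjugate by $T_i$ and use that $T_i(T_i^\top W_i\gauss T_i)^{-1}T_i^\top = (W_i\gauss)^{-1}$ provided $T_i$ has full row rank $d_i$ — which holds because $T_i = \begin{bmatrix}I_{d_i} & \scov\iplus^{-1}\scov\ipm\end{bmatrix}\projm_i$ visibly contains an identity block. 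That last rank fact is the one genuinely fiddly point, but it is immediate from the block structure of $T_i$.

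The main obstacle I anticipate is not any single inequality but the clean bookkeeping of part (ii): correctly identifying the population limit of $\est\param_i$ as $T_i\param$ (rather than $\projm_i\param$, cf.\ the paper's own footnote warning) and carrying the Gaussian conditional-independence argument through the OLS sandwich formula so the cross terms between the ``omitted-variable'' contribution $\<z\iplus,\param\imin\>$ and the noise $\noise$ vanish, leaving exactly the scalar factor $\norm{\param\imin}_{\Gamma\imin}^2 + \sigma^2$. Everything else is either a direct quote of \Cref{thm:upper-bound} or routine p.s.d.\ monotonicity.
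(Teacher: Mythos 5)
Your overall architecture matches the paper's proof: consistency of the plug-in weights $\hat{W}_i\gauss$ (via $\hscov\iplus \cp \scov\iplus$ and $R_i \cp \norm{\param\imin}_{\Gamma\imin}^2 + \sigma^2$) plus Slutsky for the asymptotic normality of $\est{\param}\collab$, the Gaussian independence of $z\iplus$ and $x\iplus$ to collapse $W_i\opt$ to $W_i\gauss = \scov\iplus/(\norm{\param\imin}_{\Gamma\imin}^2+\sigma^2)$, the continuous-mapping/delta-method step for $\est{\param}_i\collab$, and the OLS sandwich giving $\sqrt{n}(\est{\param}_i - T_i\param) \cd \normal(0,(W_i\gauss)^{-1})$. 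All of that is sound and is essentially what the paper does (the paper also explicitly notes that Theorem~\ref{thm:upper-bound} cannot be quoted verbatim because the weights reuse the training data, but resolves it exactly as you do, since the weights converge in probability to constants).

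The genuine gap is in your proof of the final inequality $(W_i\gauss)^{-1} \succeq T_i C\gauss T_i^\top$. You drop the terms $j \neq i$ from $(C\gauss)^{-1} = \sum_j T_j^\top W_j\gauss T_j$ and invoke anti-monotonicity of the inverse to conclude $C\gauss \preceq (T_i^\top W_i\gauss T_i)^{-1}$. But $T_i^\top W_i\gauss T_i \in \R^{d\times d}$ has rank $d_i < d$, so this inverse does not exist; and if you substitute the Moore--Penrose pseudoinverse, the implication $A \succeq B \succeq 0 \Rightarrow B^\dagger \succeq A^\dagger$ is simply false when the ranks differ (take $A = I_2$, $B = \mathrm{diag}(1,0)$: then $B^\dagger = B \not\succeq I_2 = A^\dagger$). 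So the first link of your chain breaks, even though your terminal identity $T_i (T_i^\top W_i\gauss T_i)^\dagger T_i^\top = (W_i\gauss)^{-1}$ is in fact correct for full-row-rank $T_i$. The paper avoids this by a Schur-complement argument: it shows the block matrix $M = \begin{bmatrix} (W_i\gauss)^{-1} & T_i \\ T_i^\top & (C\gauss)^{-1} \end{bmatrix}$ is p.s.d.\ (using $(C\gauss)^{-1} \succeq T_i^\top W_i\gauss T_i$ and factoring the reduced matrix as an outer product), whence its Schur complement $(W_i\gauss)^{-1} - T_i C\gauss T_i^\top \succeq 0$. A minimal repair of your own route that never inverts a singular matrix: writing $S = (C\gauss)^{-1}$ and $U = S^{-1/2} T_i^\top (W_i\gauss)^{1/2} \in \R^{d\times d_i}$, the claim is $U^\top U \preceq I_{d_i}$, which is equivalent to $U U^\top \preceq I_d$ (same nonzero spectrum), i.e.\ to $T_i^\top W_i\gauss T_i \preceq S$, and that is exactly your "drop the $j\neq i$ terms" observation applied on the correct side.
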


\section{Comparison with other methods} \label{sec:comparison}

\begin{table}[t]
    \centering
	\renewcommand{\arraystretch}{1.5} 
    \begin{tabular}{|C{3.8cm}|C{3.1cm}|C{3.7cm}|C{2.1cm}|}
    \hline
    Method & Full-feature asymptotic covariance & Missing-feature asymptotic covariance & Communication cost for agent $i$ \\
    \hline
    Local OLS - $\hparam_i$ &  - & $(W_i\gauss)\inv$ & 0 \\
    \hline
    Local imputation w/ collaboration - $\est{\param}\impute_i $ &$ \prn{\sum_{i=1}^m T_i^\top  W_i\gauss   T_i}^{-1}$& $T_i \prn{\sum_{i=1}^m T_i^\top  W_i\gauss   T_i}^{-1}T_i^\top$ & $\Theta(d^2)$ \\
    \hline
    Global imputation - $\est{\param}\imputeglb_i$  & $ \prn{\sum_{i=1}^m T_i^\top  W_i\gauss   T_i}^{-1}$& $T_i \prn{\sum_{i=1}^m T_i^\top  W_i\gauss   T_i}^{-1}T_i^\top$ & $\Theta(nd_i)$ \\
    \hline
    \textsc{Collab} - $\est{\param}\collab_i$& $ \prn{\sum_{i=1}^m T_i^\top  W_i\gauss   T_i}^{-1}$& $T_i \prn{\sum_{i=1}^m T_i^\top  W_i\gauss   T_i}^{-1}T_i^\top$ & $\Theta(d_i^2)$ \\
    \hline
    \end{tabular}
    \caption{Full and Missing feature asymptotic covariance and communication cost for agent $i$. Communication cost is measured by how many real numbers are received and sent from agent $i$. }
    \label{table:Comparison}
\end{table}

In this section, we compare our collaborative learning procedure with other popular least squares techniques based on imputation and comment on the statistical efficacy and communication cost differences. We summarize our analysis in \Cref{table:Comparison}. The proofs of the theorems are in Appendix~\ref{proof:upper-bound-comparison}.

\paragraph{Local imputation w/ collaboration.} Suppose a coordinating server collected covariance information $\scov_i$ from each agent and then distributed $\scov$ back to each of them.
Then one intuitive strategy is to use this information to impute each agent's local data by replacing $X\iplus$ with $\E[X_i \mid X\iplus] = X\iplus T_i$, before performing local linear regression. In other words, instead of computing $\hparam_i$, compute
\begin{align*}
	\est{\param}\impute_i = (T_i^\top X\iplus^\top X\iplus T_i)^\dagger T_i^\top X\iplus^\top y_i
\end{align*}
to send back to the coordinating server.
Note that we use Moore–Penrose inverse here as $T_i^\top X\iplus^\top X\iplus T_i$ is in general of rank $d_i$, and $\est{\param}\impute_i$ is then the min-norm interpolant of agent $i$'s data. Similar to \textsc{Collab}, we can use weighted empirical risk minimization parameterized by $W_i \in \R^{d \times d}$ and to aggregate $\est{\theta}\impute$ via
\begin{align*} 
	\est{\theta}\impute = \est{\param}(W_1, \cdots, W_m) \defeq \argmin_\param \sum_{i=1}^m  \norm{T_i^\top (T_i T_i^\top)^{-1} T_i  \param - \est{\param}_i\impute  }_{W_i}^2.
\end{align*}

The next theorem, in conjunction with Theorem~\ref{thm:upper-bound}, implies that under the WERM optimization scheme, aggregation of least squares estimators on imputed local data does not bring additional statistical benefit. In fact, the local imputation estimator is a linearly transformed on local OLS $\est{\param}_i$.

\begin{theorem} \label{thm:upper-bound-imputed}
	For  $\est{\param}_i\impute$ from agent $i$, we have $\est{\param}\impute_i = T_i^\top (T_i T_i^\top)^{-1} \est{\param}_i$. Given any weighting matrices $W_i \in \R^{d \times d}$, the aggregated imputation estimator $\est{\param}\impute $ is consistent and asymptotically normal
	\begin{align*}
		\sqrt{n} \prn{\est{\param}\impute - \param} = \normal(0, C\impute(W_1, \cdots, W_m)).
	\end{align*}
	Using the same weights $W_i\opt \in \R^{d_i \times d_i}$ as in Theorem~\ref{thm:upper-bound} for aggregated $\est{\param}\impute$, we have
	under p.s.d.\ cone order, for weights $W_i$, 
	$C\impute(W_1, \cdots, W_m) \succeq C\opt$, where $C\opt = (\sum_{i=1}^m T_i^\top  W_i\opt   T_i)^{-1}$. In addition, the equality holds when $W_i =  T_i^\top W_i\opt T_i$.
\end{theorem}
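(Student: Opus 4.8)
The plan is to reduce all the claims to \Cref{thm:upper-bound} by means of two algebraic identities, so that essentially no new probabilistic work is needed.

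First I would establish the local identity $\est{\param}\impute_i = T_i^\top(T_iT_i^\top)^{-1}\est{\param}_i$. Writing $M = X\iplus T_i$ and using the standard pseudoinverse identity $(M^\top M)^\dagger M^\top = M^\dagger$, we get $\est{\param}\impute_i = (X\iplus T_i)^\dagger y_i$. Now $T_i = \begin{bmatrix} I_{d_i} & \scov\iplus^{-1}\scov\ipm\end{bmatrix}\projm_i$ has full row rank $d_i$, and with probability one (for $n\geq d_i$) $X\iplus\in\R^{n\times d_i}$ has full column rank $d_i$; under exactly these conditions the reverse-order law $(AB)^\dagger = B^\dagger A^\dagger$ holds (one checks the four Penrose conditions for $B^\dagger A^\dagger$, using $A^\dagger A = I_{d_i}$ and $BB^\dagger = I_{d_i}$). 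Hence $(X\iplus T_i)^\dagger = T_i^\dagger X\iplus^\dagger = T_i^\top(T_iT_i^\top)^{-1}(X\iplus^\top X\iplus)^{-1}X\iplus^\top$, and right-multiplying by $y_i$ yields the identity; in particular $\est{\param}\impute_i$ is a fixed linear image of $\est{\param}_i$.

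Next I would reduce the aggregation step. Let $P_i := T_i^\top(T_iT_i^\top)^{-1}T_i$, the orthogonal projector onto $\mathrm{range}(T_i^\top)$, so the objective defining $\est{\theta}\impute$ is $\sum_{i=1}^m\norm{P_i\param - \est{\param}\impute_i}_{W_i}^2$. By Step 1, $P_i\param - \est{\param}\impute_i = T_i^\top(T_iT_i^\top)^{-1}(T_i\param - \est{\param}_i)$, and since $T_i\param = \param\iplus + \scov\iplus^{-1}\scov\ipm\param\imin$, substituting gives, for every $\param$, $\norm{P_i\param - \est{\param}\impute_i}_{W_i}^2 = \norm{T_i\param - \est{\param}_i}_{\wt{W}_i}^2$ with $\wt{W}_i := (T_iT_i^\top)^{-1}T_i W_i T_i^\top(T_iT_i^\top)^{-1}\in\R^{d_i\times d_i}$, which is p.s.d.\ and is p.d.\ when $W_i\succ0$ since $(T_iT_i^\top)^{-1}T_i$ has full row rank. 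Therefore $\est{\theta}\impute(W_1,\dots,W_m) = \est{\param}(\wt{W}_1,\dots,\wt{W}_m)$ in the notation of \eqref{eq:defn-weighted-avg-est-param}, and \Cref{thm:upper-bound} applied to the right-hand side delivers consistency, asymptotic normality, and the identification $C\impute(W_1,\dots,W_m) = C(\wt{W}_1,\dots,\wt{W}_m)\succeq C\opt$. For the equality claim, plugging $W_i = T_i^\top W_i\opt T_i$ into $\wt{W}_i$ cancels the $T_iT_i^\top$ and $(T_iT_i^\top)^{-1}$ factors, leaving $\wt{W}_i = W_i\opt$, so $C\impute = C(W_1\opt,\dots,W_m\opt) = C\opt$.

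I expect the conceptual crux to be simply spotting the reweighting $W_i\mapsto\wt{W}_i$ that collapses the imputed problem onto the original one; everything afterward is bookkeeping. The one genuinely delicate point is invoking \Cref{thm:upper-bound} when $W_i$ — hence $\wt{W}_i$ — is only positive semidefinite, which is exactly the rank-$d_i$ choice $W_i = T_i^\top W_i\opt T_i$. For that choice $\wt{W}_i = W_i\opt\succ0$, so nothing is lost; for general p.s.d.\ $W_i$ one observes that $\est{\theta}\impute$ depends on $\param$ only through the projections $\{P_i\param\}_i$ and the WERM solution stays unique as soon as $\sum_i P_iW_iP_i = \sum_i T_i^\top\wt{W}_iT_i$ is invertible (the same joint non-degeneracy condition already implicit in \Cref{thm:upper-bound}), so that theorem's delta-method/CLT argument transfers verbatim with $\wt{W}_i$ in place of $W_i$; alternatively one proves the bound for $W_i\succ0$ and passes to a limit. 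Checking that the reverse-order law's full-rank hypotheses hold almost surely is routine given $n\gg d_i$.
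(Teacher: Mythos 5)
Your proposal is correct and follows essentially the same route as the paper's proof: establish $\est{\param}\impute_i = T_i^\top(T_iT_i^\top)^{-1}\est{\param}_i$ by pseudoinverse algebra, then observe that the imputed WERM objective collapses to the original one in \eqref{eq:defn-weighted-avg-est-param} under the weight transformation $\wt{W}_i = (T_iT_i^\top)^{-1}T_iW_iT_i^\top(T_iT_i^\top)^{-1}$, so that Theorem~\ref{thm:upper-bound} delivers everything, with $W_i = T_i^\top W_i\opt T_i$ mapping to $\wt{W}_i = W_i\opt$. The only (cosmetic) differences are that you derive the identity via the full-rank reverse-order law $(X\iplus T_i)^\dagger = T_i^\dagger X\iplus^\dagger$ where the paper uses $(A^\top A)^\dagger = A^\dagger(A^\dagger)^\top$ with $A = (X\iplus^\top X\iplus)^{1/2}T_i$, and that you explicitly flag the positive-semidefiniteness of $T_i^\top W_i\opt T_i$ — a point the paper glosses over.
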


As we will see in Sec.~\ref{sec:lower-bounds} where we provide minimax lower bound for weak observation models, the fact that the weighted imputation does not outperform our \textsc{Collab} approach is because the WERM on local OLS without imputation is already optimal. In fact, having access to the features will not achieve better estimation rate for both the global parameter $\param$ and local parameters $T_i \param$.

In terms of communication cost, this local imputation method requires more communication than \textsc{Collab}, as a central server needs to communicate $\scov$ to all the hospitals. This amounts to a total of $ \Theta(m d^2)$ communication cost instead of $\Theta(\sum_{i\in[m]} d_i^2)$ communication cost for \textsc{Collab}.

\paragraph{Global imputation.} 
Finally, we analyze the setting where 
we allow each agent to send the central server all of their data $(X\iplus, y_i)$ for $i=1,\cdots, m$ instead of their local estimators, $\est{\param}_i$ or $\est{\param}_i\impute$. Having all the data with structured missingness available, a natural idea is to first impute the data, replacing $X\iplus$ with $\E[X_i \mid X\iplus] = X\iplus T_i$, and then performing weighted OLS on \emph{all} of the $nm$ data points. Namely for scalars $\alpha_1, \cdots, \alpha_m > 0$, we take
\begin{align*}
	\est{\param}\imputeglb = \est{\param}\imputeglb(\alpha_1, \cdots, \alpha_m) := \prn{\sum_{i=1}^m \alpha_i T_i^\top X\iplus^\top X\iplus T_i}^{-1} \prn{\sum_{i=1}^m \alpha_i T_i^\top X\iplus^\top y_i}.
\end{align*}
Surprisingly, in spite of the additional power,
$\est{\param}\imputeglb$ still does not beat $\est{\param}$ in Theorem~\ref{thm:upper-bound}.

\begin{theorem} \label{thm:upper-bound-imputed-glb}
	For any scalars $\alpha_1, \cdots, \alpha_m > 0$, $\est{\param}\imputeglb$ is  consistent and asymptotically normal
	\begin{align*}
		\sqrt{n} \prn{\est{\param}\imputeglb - \param} = \normal(0, C\imputeglb(\alpha_1, \cdots, \alpha_m)). 
	\end{align*}
	Recall the lower bound matrix $C\opt \defeq (\sum_{i=1}^m T_i^\top  W_i\opt   T_i)^{-1}$ in Theorem~\ref{thm:upper-bound}. If $X_i \sim \normal(0, \scov)$, we have
	under p.s.d.\ cone order and any $\alpha_i > 0$, $C\imputeglb(\alpha_1, \cdots, \alpha_m) \succeq C\opt$. In addition, the equality holds when $\alpha_i = 1/(\|\param\imin\|_{\Gamma\imin}^2 + \sigma^2)$.
\end{theorem}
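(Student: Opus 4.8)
\emph{Proof plan.} The plan is to first isolate the stochastic fluctuation of $\est{\param}\imputeglb$ about $\param$, then run a law of large numbers together with an agent-by-agent central limit theorem, and finally reduce the covariance comparison $C\imputeglb \succeq C\opt$ to a matrix Cauchy--Schwarz inequality that I would establish by a Schur-complement argument.

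First I would substitute $y_i = X_i\param + \noise_i$ and rearrange to obtain
\begin{align*}
  \est{\param}\imputeglb - \param = \prn{\textstyle\sum_{i=1}^m \alpha_i T_i^\top X\iplus^\top X\iplus T_i}^{-1}\sum_{i=1}^m \alpha_i T_i^\top X\iplus^\top\prn{y_i - X\iplus T_i\param}.
\end{align*}
The key structural fact is that $X\iplus T_i = \E[X_i \mid X\iplus]$, so $y_i - X\iplus T_i\param = (X_i - \E[X_i\mid X\iplus])\param + \noise_i$; concretely $X\iplus^\top(y_i - X\iplus T_i\param) = \sum_{j=1}^n \xi_i^{(j)}$, where the $\xi_i^{(j)}$ are i.i.d.\ copies of $\xi_i \defeq x\iplus(\param\imin^\top z\iplus + \noise)$ with $z\iplus = x\imin - \scov\imp\scov\iplus^{-1}x\iplus$ exactly the residual appearing in \Cref{thm:upper-bound}. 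Because $z\iplus$ is a population regression residual, $\E[x\iplus z\iplus^\top] = \scov\ipm - \scov\iplus\scov\iplus^{-1}\scov\ipm = 0$, and since $\noise$ is independent and mean zero, $\xi_i$ has mean zero and covariance $V_i \defeq \E[x\iplus\param\imin^\top z\iplus z\iplus^\top\param\imin x\iplus^\top] + \sigma^2\scov\iplus$ (the cross terms vanishing by independence of $\noise$); note $V_i = \scov\iplus(W_i\opt)^{-1}\scov\iplus$ by the definition of $W_i\opt$.

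Next I would invoke the LLN to get $\frac1n\sum_i\alpha_i T_i^\top X\iplus^\top X\iplus T_i \cas A \defeq \sum_i\alpha_i T_i^\top\scov\iplus T_i$, which is nonsingular under the same nondegeneracy condition ($\bigcap_i\ker T_i = \{0\}$) that makes $\est{\param}$ well defined, while $\frac1n\sum_i \alpha_i T_i^\top X\iplus^\top(y_i - X\iplus T_i\param)\to 0$, giving $\est{\param}\imputeglb \cp \param$. Using independence across agents, the multivariate CLT gives $\frac1{\sqrt n}\sum_i\alpha_i T_i^\top\sum_j\xi_i^{(j)} \cd \normal\prn{0,\sum_i\alpha_i^2 T_i^\top V_i T_i}$, so Slutsky yields $\sqrt n(\est{\param}\imputeglb - \param)\cd\normal(0,C\imputeglb)$ with $C\imputeglb(\alpha_1,\dots,\alpha_m) = A^{-1}\prn{\sum_i\alpha_i^2 T_i^\top V_i T_i}A^{-1}$. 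Specializing to $X_i\sim\normal(0,\scov)$, the vectors $x\iplus$ and $z\iplus$ are jointly Gaussian and uncorrelated, hence independent, so $\E[x\iplus(\param\imin^\top z\iplus)^2 x\iplus^\top] = \scov\iplus\cdot\param\imin^\top\Gamma\imin\param\imin = \norm{\param\imin}_{\Gamma\imin}^2\scov\iplus$; writing $\beta \defeq \norm{\param\imin}_{\Gamma\imin}^2 + \sigma^2$ this collapses $V_i = \beta\scov\iplus$ and recovers $W_i\opt = W_i\gauss = \scov\iplus/\beta$. Hence, setting $S_i \defeq T_i^\top\scov\iplus T_i$, $A = \sum_i\alpha_i S_i$, $B \defeq \sum_i\alpha_i^2 S_i$, $D \defeq \sum_i S_i$, we have $C\imputeglb = \beta A^{-1}BA^{-1}$ and $C\opt = (\sum_i T_i^\top W_i\opt T_i)^{-1} = \beta D^{-1}$.

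Finally I would prove $A^{-1}BA^{-1}\succeq D^{-1}$, equivalently $B\succeq AD^{-1}A$. The observation is that $AD^{-1}A$ is the Schur complement of $D$ in $\begin{bmatrix}B & A\\ A & D\end{bmatrix}$, so the inequality is equivalent to positive semidefiniteness of this block matrix; and it equals $\sum_{i=1}^m \begin{bmatrix}\alpha_i^2 & \alpha_i\\ \alpha_i & 1\end{bmatrix}\otimes S_i = \sum_{i=1}^m \prn{\begin{bmatrix}\alpha_i\\1\end{bmatrix}\begin{bmatrix}\alpha_i & 1\end{bmatrix}}\otimes S_i \succeq 0$, a sum of Kronecker products of p.s.d.\ matrices. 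Equality holds exactly when $B = AD^{-1}A$, which occurs whenever the $\alpha_i$ are all equal, in particular for $\alpha_i = 1/\beta = 1/(\norm{\param\imin}_{\Gamma\imin}^2 + \sigma^2)$, where $A = D/\beta$, $B = D/\beta^2$, and $C\imputeglb = \beta(\beta D^{-1})(\beta^{-2}D)(\beta D^{-1}) = \beta D^{-1} = C\opt$. I expect the main obstacle to be this last stretch: carrying out the bias--variance bookkeeping on the imputation residual carefully enough that $V_i$ collapses to a scalar multiple of $\scov\iplus$ under Gaussianity, and then spotting the $2\times 2$-block/Kronecker structure that turns the p.s.d.\ comparison into an elementary Cauchy--Schwarz statement; the asymptotic-normality half is a routine LLN/CLT/Slutsky argument once the residual decomposition is in hand.
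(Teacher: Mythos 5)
Your proposal is correct and follows essentially the same route as the paper: the same residual decomposition $y_i - X\iplus T_i\param = X\imin\param\imin - X\iplus\scov\iplus^{-1}\scov\ipm\param\imin + \noise_i$, the same LLN/CLT/Slutsky argument yielding the sandwich covariance $A^{-1}(\sum_i\alpha_i^2 T_i^\top Q_i T_i)A^{-1}$, and the same collapse $Q_i = (\norm{\param\imin}_{\Gamma\imin}^2+\sigma^2)\scov\iplus$ under Gaussianity. The only divergence is the final p.s.d.\ comparison: the paper observes that $C\imputeglb(\alpha_1,\dots,\alpha_m) = C(\alpha_1\scov_{1+},\dots,\alpha_m\scov_{m+})$ and invokes Theorem~\ref{thm:upper-bound} wholesale, whereas you re-derive the inequality directly via the Kronecker/Schur-complement identity $\bigl[\begin{smallmatrix} B & A\\ A & D\end{smallmatrix}\bigr] = \sum_i \bigl[\begin{smallmatrix}\alpha_i^2 & \alpha_i\\ \alpha_i & 1\end{smallmatrix}\bigr]\otimes S_i \succeq 0$ --- which is structurally the same auxiliary-block-matrix trick used inside the paper's proof of Theorem~\ref{thm:upper-bound}, and as a bonus makes transparent that equality holds for any common value of the $\alpha_i$, not just $1/(\norm{\param\imin}_{\Gamma\imin}^2+\sigma^2)$.
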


The communication cost for this method is significantly larger. Having each agent send all of its data to a coordinating server requires $\Theta(\sum_{i\in[m]} d_i n)$ communication cost, as opposed to the $\Theta(\sum_{i\in[m]} d_i^2)$ communication cost for \textsc{Collab}. The fact that communication cost for this method scales with $n$ is a significant disadvantage for the reasons we outlined in the introduction.
\newcommand{\nth}{^{(n)}}
\section{Asymptotic Local Minimax Lower Bounds}
\label{sec:lower-bounds}
In this section, we prove asymptotic local minimax lower bounds that show \textsc{Collab} is (nearly) optimal. We work in the partially-fixed-design regime. 
For every sample $x \in \R^d$, $x\iplus\in \R^{d_i}$ is a fixed vector.  We draw $x\imin$ from $\normal(\mu\imin, \Gamma\imin)$ where $\mu\imin$ and $\Gamma\imin$ is the conditional mean and variance of $x\imin$ given $x\iplus$.
Here $\Gamma\imin$ is also the Schur complement. We draw $x\imin$ from $\normal(\mu\imin, \Gamma\imin)$.
The samples $x\iplus \in \R^{d_i}$ comprise the matrices $X\iplus\in \R^{n \times d_i}$. For all $i\in [m]$, we will assume we have an infinite sequence (w.r.t. $n$) of matrices $X\iplus$.
This partially-fixed-design scheme gives the estimators knowledge of the observed features and the distribution of the unobserved features, which is consistent with knowledge that \textsc{Collab} has access to. In this section we fix $\param \in \R^d$.
The corresponding label $y = x\iplus \param\iplus + x\imin \param\imin + \noise$, where $\noise \in \R$ is drawn from i.i.d.\ $\normal(0, \sigma^2 )$. We use $y_j \in \R^n$ to denote its vector form for the agent $j$.
To model the estimator's knowledge about the labels, we will have two observation models---one weaker and one stronger---which we will specify later  when we present our results.
 
For each observation model, we will have two types of results. The first type of result is a minimax lower bound for full-featured data; i.e., how well can estimator perform on a fresh sample without missing features. 
This type of result will concern the full-feature asymptotic local minimax risk
\begin{align*}
    \liminf_{n \to \infty}\minimax_{m, \varepsilon}(\{X\iplus\}_{i\in[m]};\statfamily_n, u) \defeq \liminf_{n \to \infty} \inf_{\bar{\param}} \sup_{\statdist \in \statfamily_n} n
\E_{Z \sim \statdist}  \<u, \bar{\param}(Z, \{X\iplus\}_{i\in[m]}) - \param\>^2.
\end{align*}
We will show that there exists a $B \in \R^{d \times d}$ such that the local minimax risk in the previous display is lower bounded by $u^T B u$ for all $u\in\R^d$. In other words, we have lower bounded the asymptotic covariance of our estimator with $B$ (with respect to the p.s.d.\ cone order). 
The second type of result is an agent specific minimax lower bound; i.e., what is the best prediction error an estimator (for the given observation model) can possibly have on a fresh sample for a given agent.
This type of result will deal with the missing-feature asymptotic local minimax risk, defined as
\begin{align*}
    \liminf_{n \to \infty}\minimax_{m, \varepsilon}^{i+}(\{X\iplus\}_{i\in[m]}; \statfamily_n, u) \defeq \liminf_{n \to \infty}\inf_{\bar{\param}} \sup_{\statdist \in \statfamily_n}
    n \E_{Z \sim \statdist}  \<u, \bar{\param}(Z, \{X\iplus\}_{i\in[m]}) - T_i \param\>^2.
\end{align*}
Similar to the first minimax error definition, we will again show that there exists a $B_i \in \R^{d_i \times d_i}$ such that the local minimax risk we just defined is lower bounded by $u^T B_i u$ for all $u\in\R^{d_i}$.
Recall \eqref{eqn:local-test-loss} for discussion surrounding why $T_i \param$ 
is the right object to compare against.

\subsection{Weak Observation Model: Access only to local models and features} \label{sec:weak-observation}
Recall the local least squares estimator 
$\est{\param}_i = (X\iplus^\top X\iplus)^{-1} X\iplus^\top y_i$.
Let $P_{\param}^{\est{\param}}$ be a distribution over $\est{\param}_1, \ldots , \est{\param}_m$ induced by $\param$ and $(\noise_1, \ldots, \noise_m) \simiid \normal(0, \sigma^2 I_n)$.
We define the following family of distributions
    $\mc{P}^{\est{\param}}_{n,c} \defeq \{P_{\param'}^{\est{\param}} : \ltwo{\param' - \param} \leq c n^{- 1/2}\}$ 
which defines our observation model. Intuitively, in this observation model, we are constructing a lower bound for estimators which have access to the features $X_{1+}, \ldots, X_{m+}$, the population covariance $\scov$, and access to $\est{\param}_1, \ldots, \est{\param}_m$. In comparison, our estimator \textsc{Collab} only uses $\scov$ and $\est{\param}_1, \ldots \est{\param}_m$.
We present our first asymptotic local minimax lower bound result here. The proof of this result can be found in \Cref{sec:proof-weak-global-lb}.

\begin{theorem}\label{thm:weak-global-lb}
    Recall that $C\gauss \defeq (\sum_{i=1}^m T_i^\top  W_i\gauss   T_i)^{-1}$.
    For all $\in [m]$ and $n$ let the rows of $X\iplus$ be drawn i.i.d.\ from $\normal(0, \scov\iplus)$. Then for all $u\in\R^d$, with probability 1, the full-feature asymptotic local minimax risk for $\mc{P}^{\est{\param}}_{n,c}$ is bounded below as,
    \begin{align*}
        \liminf_{c \to \infty} \liminf_{n \to \infty}\minimax_{m, \varepsilon}(\{X\iplus
        \}_{i\in[m]};\mc{P}^{\est{\param}}_{n,c}, u)  \geq u^\top C\gauss u.
    \end{align*}
    For all $u\in\R^{\di}$, with probability 1, the missing-feature asymptotic local minimax risk for $\mc{P}^{\est{\param}}_{n,c}$ is bounded below as
    \begin{align*}
        \liminf_{c \to \infty} \liminf_{n \to \infty}\minimax_{m, \varepsilon}^{i+}(\{X\iplus
        \}_{i\in[m]};\mc{P}^{\est{\param}}_{n,c}, u) \geq u^\top T_i C\gauss T_i^\top u.
    \end{align*}
\end{theorem}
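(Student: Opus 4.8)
The plan is to reduce the problem to a finite-dimensional local asymptotic minimax (LAM) statement — essentially a Hájek–Le Cam convolution/local-minimax theorem for a Gaussian-shift experiment — by showing that, conditionally on the (fixed, growing) sequence of design matrices $\{X\iplus\}$, the observation model $\mc{P}^{\est{\param}}_{n,c}$ is locally asymptotically normal with a variance we can compute explicitly, and that the efficient bound it produces is exactly $C\gauss$ (resp. $T_i C\gauss T_i^\top$). First I would recall from \Cref{thm:upper-bound} and \Cref{cor:upper-bound-local} that, under $\param$, $\sqrt{n}(\est{\param}_i - T_i\param) \cd \normal(0,(W_i\gauss)^{-1})$; the same CLT computation shows that under a local perturbation $\param' = \param + h/\sqrt{n}$ the law of $\sqrt{n}(\est{\param}_i - T_i\param)$ converges to $\normal(T_i h, (W_i\gauss)^{-1})$, since $T_i$ is exactly the linear map with $\E[\est{\param}_i] = T_i\param'$ in the Gaussian-feature model (this is where the identity $\est{\param}_i \to T_i\param$ and the closed form $W_i\gauss = \scov\iplus/(\|\param\imin\|_{\Gamma\imin}^2+\sigma^2)$ enter). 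Because the agents are independent, the joint limit experiment observed by any estimator with access to $(\est{\param}_1,\dots,\est{\param}_m)$ (and the features and $\scov$, which are ancillary here) is the Gaussian shift $Z_i \sim \normal(T_i h, (W_i\gauss)^{-1})$, $i\in[m]$, with unknown shift $h\in\R^d$.

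Next I would invoke the standard local asymptotic minimax theorem (e.g. van der Vaart, \emph{Asymptotic Statistics}, Thm.~8.11, or the Le Cam–Hájek version the authors presumably cite in the appendix) applied to this limit experiment: for estimating the linear functional $\<u, h\>$ from the Gaussian-shift observations, the local minimax risk over shrinking balls, after taking $c\to\infty$, is bounded below by the variance of the best linear unbiased (equivalently, the Gauss–Markov / GLS) estimator of $\<u,h\>$, which is $u^\top \prn{\sum_{i=1}^m T_i^\top W_i\gauss T_i}^{-1} u = u^\top C\gauss u$. The reduction from the finite-$n$ risk $n\,\E\<u,\bar\param - \param\>^2$ to the limit-experiment risk is the usual two-step argument: a lower bound over the finite parameter family by a Bayes/van Trees or pairwise-testing argument, then passing to the limit using the LAN expansion and the fact that the score statistics $\sqrt{n}(\est{\param}_i - T_i\param)$ converge jointly to the Gaussian-shift coordinates (asymptotic sufficiency / the convergence of experiments). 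For the missing-feature bound one repeats the argument verbatim with the functional $\<u, T_i h\>$ in place of $\<u,h\>$, so that the efficient variance becomes $u^\top T_i C\gauss T_i^\top u$; equivalently, substitute $u \leftarrow T_i^\top u$ into the full-feature bound.

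The main obstacle — and the place I would spend the most care — is the interchange of limits and the "with probability $1$" over the random designs. One must check that the limiting per-agent covariance is $(W_i\gauss)^{-1}$ for \emph{almost every} realization of the i.i.d.\ $\normal(0,\scov\iplus)$ rows, which requires a law-of-large-numbers / strong-consistency statement for $\frac1n X\iplus^\top X\iplus \to \scov\iplus$ and for the relevant cross moments defining $W_i\gauss$, uniformly enough to drive the LAN expansion; and one must justify taking $\liminf_{n\to\infty}$ inside before $\liminf_{c\to\infty}$ so that the LAM theorem (which is itself a statement about $\lim_c \lim_n$) applies on the event of full probability. A secondary subtlety is that the estimator is allowed to see $X\iplus$ and $\scov$ but these carry no information about $h$ in the limit experiment (they are fixed / ancillary), so they can be conditioned on without loss; I would state this as a short lemma that the limit experiment is genuinely the pure Gaussian shift above. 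Modulo these measure-theoretic bookkeeping points, the argument is the textbook LAM lower bound, with the only problem-specific input being the identification of the limiting information matrix as $\sum_i T_i^\top W_i\gauss T_i$, which is already implicit in \Cref{cor:upper-bound-local}.
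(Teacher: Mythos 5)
Your proposal is correct in outline but takes a genuinely different route from the paper. You reduce to a Gaussian-shift limit experiment and invoke the H\'ajek--Le Cam local asymptotic minimax theorem, identifying the efficient covariance as that of the GLS aggregator, $\prn{\sum_{i=1}^m T_i^\top W_i\gauss T_i}^{-1} = C\gauss$. The paper instead applies a van Trees (Bayesian Cram\'er--Rao) inequality directly at finite $n$: conditionally on the designs, $\est{\param}_i$ is exactly Gaussian with mean $T_i\param$ and covariance $(\norm{\param\imin}_{\Gamma\imin}^2+\sigma^2)(X\iplus^\top X\iplus)^{-1}$, so one computes the Fisher information $\finfo^n(\param)$ of this curved Gaussian family in closed form, shows $\finfo^n(\param)/n \to (C\gauss)^{-1}$ almost surely, and reads off both bounds by choosing $\psi(\param)=\param$ and $\psi(\param)=T_i\param$. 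The van Trees route buys self-containedness: it is a single integral inequality requiring only the Fisher information and its a.s.\ limit, with no need to establish LAN or convergence of experiments. Your route buys conceptual clarity (it explains \emph{why} the \textsc{Collab} weights are optimal: they are the GLS weights in the limit experiment) and would yield convolution-type statements, but it leaves two substantive verifications open that the paper's calculation handles explicitly. First, the model is a \emph{curved} Gaussian family --- the variance of $\est{\param}_i$ depends on $\param$ through $\norm{\param\imin}_{\Gamma\imin}^2$ --- so to claim the limit experiment is the pure shift $Z_i\sim\normal(T_i h,(W_i\gauss)^{-1})$ you must check that the variance-score contributes negligibly to the likelihood ratio; this is precisely the $o_n(1)$ term in the paper's Fisher information (it is $O(1)$ against the $O(n)$ mean part). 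Second, the a.s.-over-designs and $\liminf_c\liminf_n$ bookkeeping you flag is real work in the LAM framework, whereas in the van Trees argument it reduces to a strong law for $\frac{1}{n}X\iplus^\top X\iplus$. Neither point breaks your argument, but both would need to be written out.
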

This exactly matches the upper bound for \textsc{Collab} we presented in \Cref{cor:upper-bound-local}.

\subsection{Strong Observation Model: Access to features and labels}

Define the family of distributions 
    $\mc{P}^y_{n,c} \defeq \{P_{\param'}^y : \ltwo{\param' - \param} \leq c n^{- 1/2}\}$ as the observation model.
Intuitively, in this model, we are constructing a lower bound for estimators having access to all of the features $X_{1+}, \ldots, X_{m+}$ and access to $y_1, \ldots y_m$. This observation model is stronger than the previous observation model because estimators now have access to the labels $y$. We note again that our estimator \textsc{Collab} only uses $\scov$ and $\est{\param}_1, \ldots \est{\param}_m$. The quantities our estimator rely on do not scale with $n$, making our estimator much weaker than other potential estimators in this observation model, as estimators are allowed to depend on $y_i$, which grows in size with $n$.
We present our second asymptotic local minimax lower bound result here, starting with defining the strong local lower bound matrix $
	C\strong := (\sum_{i=1}^m 2\scov/(\norm{\param\imin}_{\Gamma\imin}^2 + \sigma^2))\inv$. 
The proof of this result is in \Cref{sec:proof-strong-global-lb}.
\begin{theorem}\label{thm:strong-global-lb}
    For all $i\in [m]$ and $n$ let the rows of $X\iplus$ be drawn i.i.d.\ from $\normal(0, \scov\iplus)$. Then for all $u\in\R^d$, with probability 1, the full-feature asymptotic local minimax risk for $\mc{P}^{y}_{n,c}$ is bounded below as
    \begin{align*}
        \liminf_{c \to \infty} \liminf_{n \to \infty}\minimax_{m, \varepsilon}(\{X\iplus\}_{i\in[m]};\mc{P}^{y}_{n,c}, u)  \geq u^\top C\strong u .
    \end{align*}
    For all $u\in\R^{\di}$, with probability 1, the missing-feature asymptotic local minimax risk for $\mc{P}^{y}_{n,c}$ is bounded below as
    \begin{align*}
        \liminf_{c \to \infty} \liminf_{n \to \infty}\minimax_{m, \varepsilon}^{i+}(\{X\iplus\}_{i\in[m]};\mc{P}^{y}_{n,c}, u) \geq u^\top T_i C\strong T_i^\top u.
    \end{align*}
\end{theorem}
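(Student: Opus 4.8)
\emph{Proof proposal.} The plan is: (i) write down the exact law of the observations in the strong model conditionally on the design, recognizing it as a smooth Gaussian family in $\param'$; (ii) invoke a van Trees / Hájek--Le Cam local asymptotic minimax bound to reduce the claim to controlling the inverse Fisher information of this family; and (iii) dominate the normalized Fisher information by $C\strong^{-1}$ in the Loewner order by a short linear-algebra argument. The key point that makes $C\strong$ involve the full covariance $\scov$ (and a factor $2$) rather than $\scov\iplus$ is that, in the strong model, $\param'\imin$ enters \emph{both} the mean and the variance of the labels.

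\emph{Law of the data.} Fix the design $\{X\iplus\}_{i\in[m]}$. For agent $i$, the $k$-th coordinate of $y_i$ is $\<x\iplus^{(k)},\param'\iplus\> + \<x\imin^{(k)},\param'\imin\> + \varepsilon_{i,k}$, where in the partially-fixed design $x\imin^{(k)} \sim \normal(\scov\imp\scov\iplus^{-1} x\iplus^{(k)},\,\Gamma\imin)$ independently of $\varepsilon_{i,k}\sim\normal(0,\sigma^2)$. Marginalizing out $x\imin^{(k)}$ and recalling $T_i\param' = \param'\iplus + \scov\iplus^{-1}\scov\ipm\param'\imin$, the vectors $y_1,\dots,y_m$ are independent under $P^y_{\param'}$ with $y_i \sim \normal\!\big(X\iplus T_i\param',\ s_i(\param')\,I_n\big)$, $s_i(\param')\defeq\norm{\param'\imin}_{\Gamma\imin}^2+\sigma^2$. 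Thus $P^y_{\param'}$ is a product of Gaussians whose mean is linear in $\param'$ and whose variance is a quadratic form in $\param'\imin$.

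\emph{Fisher information and the minimax reduction.} Writing $s\defeq\norm{\param\imin}_{\Gamma\imin}^2+\sigma^2$, the Fisher information of this Gaussian product splits into a mean part and a variance part,
\[
 \mathcal I_n(\param) = \sum_{i=1}^m \frac{1}{s}\,T_i^\top X\iplus^\top X\iplus T_i \;+\; \frac{2n}{s^2}\sum_{i=1}^m \projm\imin^\top\Gamma\imin\param\imin\param\imin^\top\Gamma\imin\projm\imin .
\]
Since $\tfrac1n X\iplus^\top X\iplus\to\scov\iplus$ almost surely, $\tfrac1n\mathcal I_n(\param)\to\wb{\mathcal I}\defeq\tfrac1s\sum_i T_i^\top\scov\iplus T_i + \tfrac{2}{s^2}\sum_i\projm\imin^\top\Gamma\imin\param\imin\param\imin^\top\Gamma\imin\projm\imin$, which is invertible because $\wb{\mathcal I}\succeq\tfrac1s\sum_i T_i^\top\scov\iplus T_i = C\gauss^{-1}\succ0$; the same limit holds for $\tfrac1n\E_\lambda[\mathcal I_n(\param')]$ under any prior $\lambda$ supported on the $cn^{-1/2}$-ball around $\param$, since $\mathcal I_n$ is smooth in $\param'$ and of order $n$. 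Taking $\lambda$ with density $(\sqrt n/c)^d\,q_0\!\big((\sqrt n/c)(\param'-\param)\big)$ for a fixed smooth $q_0$ on the unit ball, whose Fisher information is $c^{-2}n\,J$ for a fixed matrix $J$, the matrix van Trees inequality gives $\E_\lambda\E_{P^y_{\param'}}[(\bar\param-\param')(\bar\param-\param')^\top]\succeq\big(\E_\lambda[\mathcal I_n(\param')] + c^{-2}n\,J\big)^{-1}$ for every estimator $\bar\param$. Contracting with $uu^\top$, using $\minimax_{m,\varepsilon}\ge$ Bayes risk and $\E_\lambda[\mathcal I_n(\param')] = \mathcal I_n(\param)+o(n)$, then letting $n\to\infty$ and $c\to\infty$, we obtain on a probability-one design event $\liminf_{c\to\infty}\liminf_{n\to\infty}\minimax_{m,\varepsilon}(\{X\iplus\}_{i\in[m]};\mc{P}^y_{n,c},u)\ge u^\top\wb{\mathcal I}^{-1}u$. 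For the missing-feature risk the target functional is $\<u,T_i\param'\> = \<T_i^\top u,\param'\>$, so the identical argument (van Trees for the linear functional $\param'\mapsto T_i\param'$) yields the lower bound $u^\top T_i\wb{\mathcal I}^{-1}T_i^\top u$.

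\emph{Comparison with $C\strong$, and the main obstacle.} It remains to show $\wb{\mathcal I}\preceq C\strong^{-1}=\tfrac{2m}{s}\scov$, since then operator monotonicity of the inverse gives $\wb{\mathcal I}^{-1}\succeq C\strong$ and hence $T_i\wb{\mathcal I}^{-1}T_i^\top\succeq T_iC\strong T_i^\top$, completing both bounds. Per agent, the conditional-variance identity $\scov = T_i^\top\scov\iplus T_i + \projm\imin^\top\Gamma\imin\projm\imin$ yields $T_i^\top\scov\iplus T_i\preceq\scov$ and $\projm\imin^\top\Gamma\imin\projm\imin\preceq\scov$, while $\Gamma\imin\param\imin\param\imin^\top\Gamma\imin\preceq\norm{\param\imin}_{\Gamma\imin}^2\,\Gamma\imin\preceq s\,\Gamma\imin$; combining,
\[
 \tfrac1s T_i^\top\scov\iplus T_i + \tfrac{2}{s^2}\projm\imin^\top\Gamma\imin\param\imin\param\imin^\top\Gamma\imin\projm\imin \;\preceq\; \tfrac1s T_i^\top\scov\iplus T_i + \tfrac{2}{s}\big(\scov - T_i^\top\scov\iplus T_i\big) \;\preceq\; \tfrac{2}{s}\scov ,
\]
and summing over $i\in[m]$ gives $\wb{\mathcal I}\preceq\tfrac{2m}{s}\scov$. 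The delicate step is the minimax reduction: verifying the regularity required for van Trees (equivalently, local asymptotic normality of $\{P^y_{\param'}\}$ despite the parameter-dependent variance $s_i(\param')$), controlling $\E_\lambda[\mathcal I_n(\param')]/n\to\wb{\mathcal I}$, and justifying the interchange of the two limits, all on a probability-one event for the random design; Steps identifying the law, the Fisher information, and the Loewner comparison are routine. Note the last display also shows $C\strong$ is a clean but in general non-tight floor---already loose by the factor $2$ when one agent observes every feature---which is all that is needed to certify that access to the labels cannot improve on it.
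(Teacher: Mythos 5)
Your proposal is correct and follows essentially the same route as the paper's proof: marginalize out $x\imin$ to get independent Gaussians $y_i \sim \normal(X\iplus T_i\param',\, (\norm{\param'\imin}_{\Gamma\imin}^2+\sigma^2) I_n)$, compute the Fisher information (the paper does this by direct score/outer-product calculation, you via the standard mean-plus-variance decomposition for Gaussian families, yielding the identical limit $\rho = \wb{\mathcal I}$ including the factor-$2$ variance term), invoke van Trees for the functionals $\param'\mapsto\param'$ and $\param'\mapsto T_i\param'$, and then dominate $\wb{\mathcal I}$ by $\tfrac{2m}{s}\scov$ using exactly the same Cauchy--Schwarz bound $\Gamma\imin\param\imin\param\imin^\top\Gamma\imin \preceq \norm{\param\imin}_{\Gamma\imin}^2\,\Gamma\imin$ together with $\scov - T_i^\top\scov\iplus T_i = \projm\imin^\top\Gamma\imin\projm\imin$. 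The regularity concerns you flag are handled in the paper only by noting the family is exponential and citing a mild modification of \citet{Gassiat2014RevisitingTV}, so your treatment is at the same level of rigor.
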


In view of the lower bound in the strong observation model and that of the weak observation model in Theorem~\ref{thm:weak-global-lb}, it is clear that the lower bound in the strong observation setting is in general smaller as 
\begin{align*}
	\scov - T_i^\top \Sigma\iplus T_i = \Pi_i^\top \begin{bmatrix}
		0 & 0\\
		0 & \Gamma\imin
	\end{bmatrix} \Pi_i \succeq 0,
\end{align*}
which further implies $C\gauss \succeq (\sum_{i=1}^m \scov/(\norm{\param\imin}_{\Gamma\imin}^2 + \sigma^2) )\inv \succeq C\strong$.

We argue that the two lower bounds are comparable in the missing completely at random \cite{Little2019StatisticalAW}. Consider for every agent $i$, each coordinate is missing independently with probability $p$. In this case, $(d_i, \scov\iplus, T_i)$ are i.i.d.\ random triplets parameterized by $p$.

\begin{corollary} \label{cor:lower-bounds-comparable}
	Under the random missingness setup with missing probability $p$, let the eigenvalue of $\Sigma$ be $\lambda_1(\Sigma) \geq \cdots \geq \lambda_d(\Sigma) > 0$ and define its condition number $\kappa = \lambda_1(\Sigma) / \lambda_d(\Sigma)$. Suppose $p \leq \half\kappa^{-1} (1 + \|\param\|_\scov^2 / \sigma^2)^{-1}$, we have the limits $\lim_{m \to \infty} mC\gauss$ and $\lim_{m \to \infty} mC\strong$ exist and
	\begin{align*}
		4\lim_{m \to \infty} mC\strong \succeq \lim_{m \to \infty} m C\gauss \succeq \lim_{m \to \infty} mC\strong.
	\end{align*}
\end{corollary}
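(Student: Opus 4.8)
The plan is to push both $mC\gauss$ and $mC\strong$ to their $m\to\infty$ limits via the strong law of large numbers and then compare the two limiting matrices directly in the p.s.d.\ cone order. Write $q_i:=\norm{\param\imin}_{\Gamma\imin}^2+\sigma^2$, so that $W_i\gauss=\scov\iplus/q_i$ and
\[
mC\gauss=\Big(\frac1m\sum_{i=1}^m \frac{T_i^\top\scov\iplus T_i}{q_i}\Big)^{-1},\qquad mC\strong=\Big(2\scov\cdot\frac1m\sum_{i=1}^m \frac1{q_i}\Big)^{-1}.
\]
Under the random-missingness model the agent data $(d_i,\scov\iplus,T_i,\Gamma\imin)$ -- and hence $q_i$ and $T_i^\top\scov\iplus T_i$ -- are i.i.d.\ deterministic functions of the independent coordinate-missingness patterns, of which there are at most $2^d$; so these summands take finitely many values and in particular are bounded, with $q_i\ge\sigma^2>0$.

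First I would pass to the limit. By the SLLN, almost surely $\frac1m\sum_i T_i^\top\scov\iplus T_i/q_i\to A\gauss:=\E[T_i^\top\scov\iplus T_i/q_i]$ and $\frac1m\sum_i q_i^{-1}\to a:=\E[q_i^{-1}]$ (expectations over one draw of the pattern). Both limits are positive definite: the ``nothing missing'' pattern occurs with probability $(1-p)^d>0$ and contributes $\scov/\sigma^2\succ0$ to $A\gauss$, while $a\ge(\norm{\param}_\scov^2+\sigma^2)^{-1}>0$ by the bound on $q_i$ established below. Since matrix inversion is continuous on the PD cone, the limits $\lim_m mC\gauss=(A\gauss)^{-1}$ and $\lim_m mC\strong=(2a\scov)^{-1}$ exist a.s.\ (with deterministic values). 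The right-hand inequality $\lim_m mC\gauss\succeq\lim_m mC\strong$ is then immediate: multiplying the chain $C\gauss\succeq(\sum_i\scov/q_i)^{-1}=2C\strong\succeq C\strong$ from the display just before the corollary by $m$ and letting $m\to\infty$ gives $\lim_m mC\gauss\succeq2\lim_m mC\strong\succeq\lim_m mC\strong$.

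It remains to prove $4\lim_m mC\strong\succeq\lim_m mC\gauss$, i.e.\ $4(2a\scov)^{-1}\succeq(A\gauss)^{-1}$; since both sides are PD, inverting shows this is equivalent to $A\gauss\succeq\tfrac12 a\scov$. Here I would use $E_i:=\scov-T_i^\top\scov\iplus T_i$, which by the display preceding the corollary equals $\Pi_i^\top\bigl[\begin{smallmatrix}0&0\\0&\Gamma\imin\end{smallmatrix}\bigr]\Pi_i\succeq0$, so that $A\gauss=a\scov-\E[E_i/q_i]$; the task is to bound $\E[E_i/q_i]$. From the block form and $\Gamma\imin\preceq\scov\imin\preceq\lambda_1(\scov)I$ (the Schur complement is dominated by the corresponding block) we get $E_i\preceq\lambda_1(\scov)P_i$, where $P_i:=\Pi_i^\top\bigl[\begin{smallmatrix}0&0\\0&I\end{smallmatrix}\bigr]\Pi_i=\sum_{j\text{ missing for }i}e_je_j^\top$ is the coordinate projection onto agent $i$'s missing coordinates; combining with $q_i^{-1}\le\sigma^{-2}$ and $\E[P_i]=pI_d$ yields $\E[E_i/q_i]\preceq(p\lambda_1(\scov)/\sigma^2)I_d$. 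On the other side, the Schur-complement inequality $\norm{\param\imin}_{\Gamma\imin}^2\le\norm{\param}_\scov^2$ (the irreducible error in \eqref{eqn:local-test-loss} is at most the total signal variance) gives $q_i\le\norm{\param}_\scov^2+\sigma^2$, hence $a\scov\succeq\lambda_d(\scov)\,a\,I_d\succeq(\lambda_d(\scov)/(\norm{\param}_\scov^2+\sigma^2))I_d$. The hypothesis $p\le\half\kappa^{-1}(1+\norm{\param}_\scov^2/\sigma^2)^{-1}$ is exactly $p\lambda_1(\scov)/\sigma^2\le\tfrac12\lambda_d(\scov)/(\norm{\param}_\scov^2+\sigma^2)$, so $\E[E_i/q_i]\preceq\tfrac12 a\scov$ and therefore $A\gauss=a\scov-\E[E_i/q_i]\succeq\tfrac12 a\scov$, which completes the proof.

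The main obstacle is this last step: the correction $\E[E_i/q_i]$ must be squeezed between scalar multiples of $I_d$ sharply enough that $\lambda_1(\scov)$, $\lambda_d(\scov)$ and the signal-to-noise ratio $\norm{\param}_\scov^2/\sigma^2$ recombine into precisely the stated threshold on $p$ -- the two key inequalities being $E_i\preceq\lambda_1(\scov)P_i$ (via $\Gamma\imin\preceq\scov\imin\preceq\lambda_1(\scov)I$) and $q_i\le\norm{\param}_\scov^2+\sigma^2$ (via the Schur complement). Everything else -- the SLLN for the bounded i.i.d.\ matrix summands, continuity of inversion on the PD cone, and positive-definiteness of the limits -- is routine.
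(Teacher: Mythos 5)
Your proof is correct and takes essentially the same route as the paper's: both reduce the left inequality to bounding the averaged Schur-complement correction $\frac{1}{m}\sum_{i}(\scov - T_i^\top\scov\iplus T_i)/(\sigma^2+\norm{\param\imin}_{\Gamma\imin}^2)$ above by $p\lambda_1(\scov)I/\sigma^2$ and then absorbing it into half of the strong-model average using $\norm{\param\imin}_{\Gamma\imin}^2\le\norm{\param}_\scov^2$, the condition number, and the hypothesis on $p$. The only differences are cosmetic---you bound $\Gamma\imin\preceq\scov\imin\preceq\lambda_1(\scov)I$ before averaging over missingness patterns (so $\E[P_i]=pI$ suffices), whereas the paper first computes the exact limit $p\,\mathrm{diag}(\scov)+p^2(\scov-\mathrm{diag}(\scov))$ and then bounds it by $p\lambda_1(\scov)I$, and you spell out the limit-existence and inversion-continuity steps in more detail than the paper does.
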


\section{Experiments}
We perform experiments to empirically test and compare the methods we have discussed in this paper. Our first experiment is on real data with potential distribution shift between agents and models a potentially real setting concerning the US Census. This experiment is meant to show how our methods would perform in practice. The setup of the synthetic experiment is similar to the setup of our theory; due to space, we defer this to \Cref{sec:experiment-synthetic}. 

\subsection{US Census Experiments}
\label{sec:experiment-census}

\begin{figure*}[t!]
    \small
        \centering
        \subfigure[Prediction Error for large $n$]{
            \includegraphics[width=0.3\linewidth]{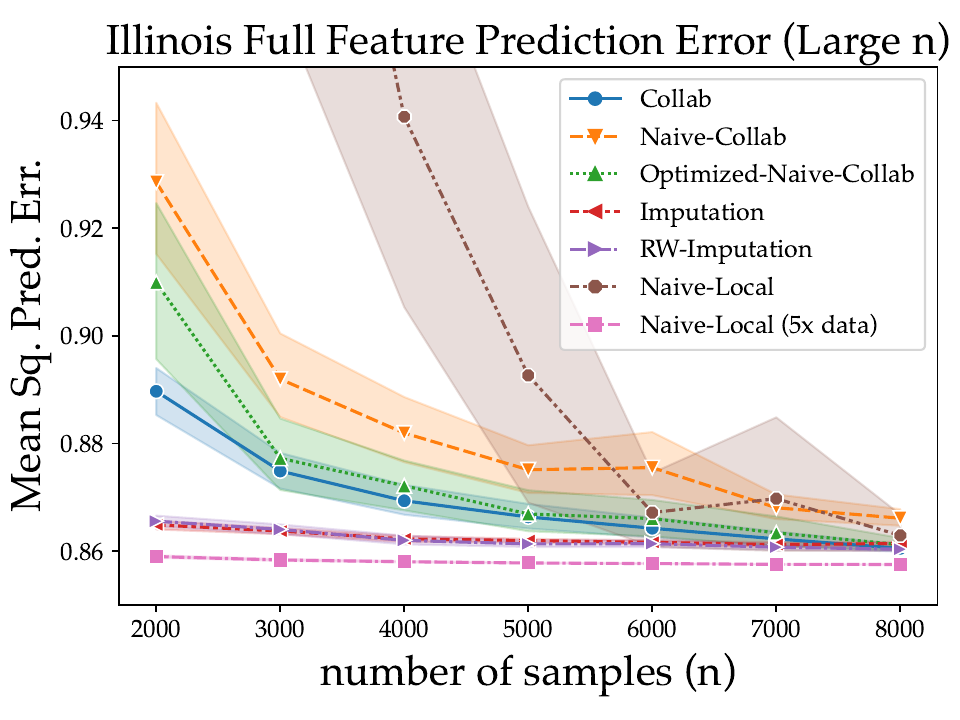}
            \label{fig:folk-large-n}}
        \hspace{0.1cm}
        \subfigure[Prediction Error for small $n$]{
            \includegraphics[width=0.3\linewidth]{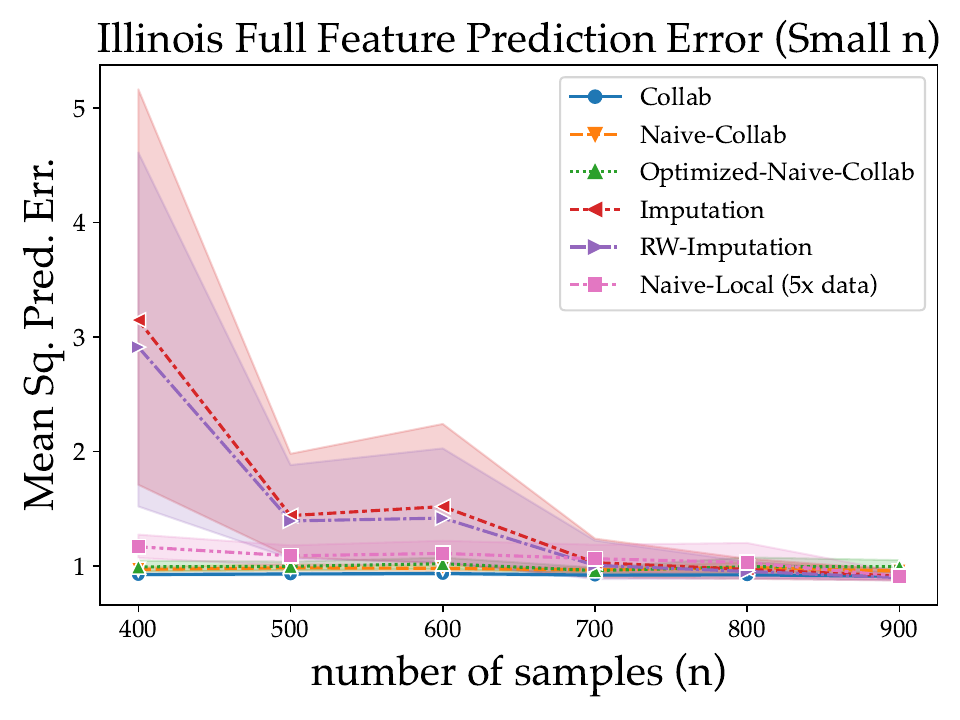}
            \label{fig:folk-small-n}}
        \hspace{0.1cm}
        \subfigure[Small $n$ omitting Impute algs.]{
            \includegraphics[width=0.3\linewidth]{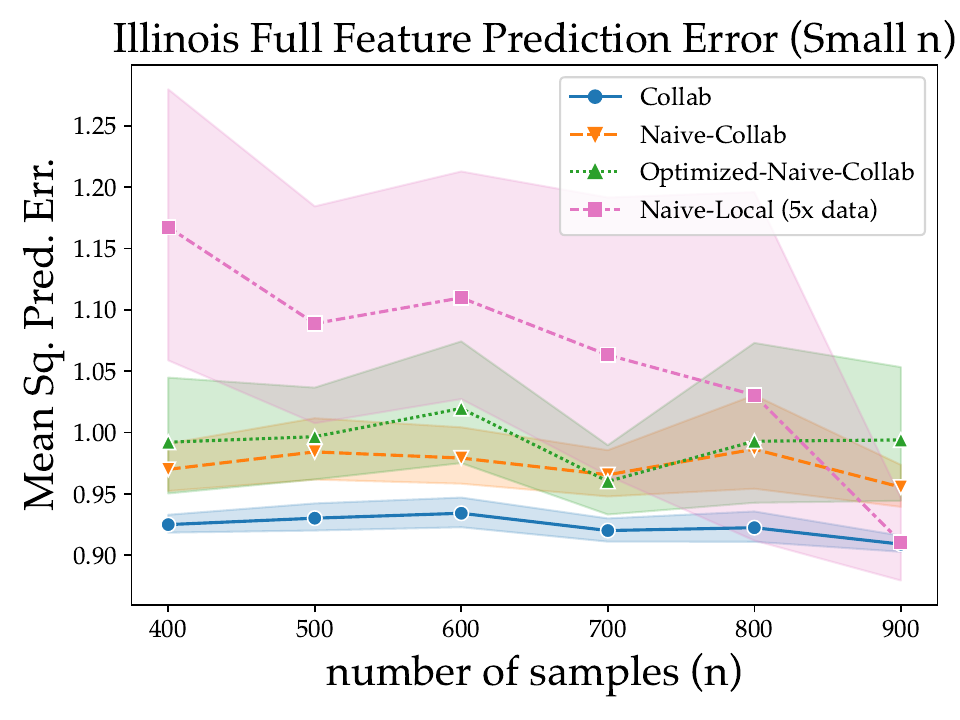}
            \label{fig:folk-small-n-2}}
        \caption{Experimental results for US Census Experiment}
    \end{figure*}
\looseness=-1
We experiment on real US census data modified from the ACSTravelTime dataset from the \texttt{folktables} package \cite{Ding2021RetiringAN} to test how our methods work on real data, which may contain covariate shift across agents. After dataset preprocessing, described in the \Cref{sec:experiment-census-details}, we have $d=37$ features. We plot the covariance matrix of the features in \Cref{fig:folk-cov-heatmap}. We compute the covariance from training data across all of the datacenters. We assume we are able to do this because this computation can be done in a distributed manner, without communicating training data points or labels.

We (artificially) construct $m=5$ datacenters (agents), each containing data from one of California, New York, Texas, Florida, and Illinois. The goal is to collaboratively learn a model for each datacenter in a communication efficient way.
This setup models potentially real settings where state governments are interested in similar prediction tasks but may not be allowed to directly transfer data about their constituents directly to one another due to privacy or communication constraints.
The California datacenter will have access to $37$ features, New York to $36$, Texas to $35$, Florida to $30$, and Illinois to $27$. This models the feature heterogeneity which varies across geography. 
Each datacenter will have $n$ datapoints, which we vary in this experiment. The objective to predict people from Illinois's travel time to work given all $37$ features. This task models the setting where the datacenter of interest does not have access to labeled full-featured, data to use to predict on full-featured test data.

We compare our method \textsc{Collab} 
against methods we call Naive-Local, Naive-Collab, Optimized-Naive-Colllab, Imputation, and RW-Imputation. 
We briefly describe each method here; \Cref{sec:experiment-census-details} contains a more detailed description of each method.
Naive-Local refers to each agent locally perform OLS to construct $\hparam_i$.
Naive-Collab does an equal-weighted average of the agent OLS models---$\sum_{i \in [m]} \projm\iplus^\top \hparam_i / m$. Optimized-Naive-Collab uses gradient descent to optimize the choices of weights of Naive-Collab. Optimized-Naive-Collab uses fresh labeled samples without any missing features during gradient descent, so in this sense, Optimized-Naive-Collab is more powerful than our method. Imputation refers to the global imputation estimator $\hparam\imputeglb$ with $\alpha_i = 1/m$. RW-Imputation is Imputation but with the optimal choice of weights $\alpha_i$. We also compare against Naive-Local trained with $5n$ datapoints. We choose $5n$ to model the hypothetical scenario setting where all of the other datacenters available contain data (albeit with missing features) from Illinois. For each method that we test, we run $80$ trials to form $95\%$ confidence intervals. We see that for $n \leq 800$ in \Cref{fig:folk-small-n,fig:folk-small-n-2}, \textsc{Collab} performs the best; the imputation methods do the worst, and have much higher variance. In this small $n$ regime, even the Naive-Local method with $5$ times the data does worse than \textsc{Collab}. For $n \geq 2000$ in \Cref{fig:folk-large-n}, the aggregation methods do worse than the imputation methods, and Naive-Local method with $5$ times the data is the best performing method. However, \textsc{Collab} remains better than Optimized-Naive-Collab and Naive-Collab. 
The fact that the performance of the Naive-Collab approaches in much closer to the performance of \textsc{Collab} than in the Synthetic experiment in \Cref{sec:experiment-synthetic} is not surprising, as the covariance of the features is much more isotropic, meaning that the naive aggregation methods will not incur nearly as much bias.

\section{Discussion and Future Work}
\label{sec:discussion}

\paragraph{Optimal weights beyond Gaussianity.} 
$\Ep \brk{x\iplus  \param\imin^\top z\iplus  z\iplus^\top  \param\imin x\iplus^\top}$ has a nice closed form in Gaussian setting because $z\iplus$ and $x\iplus$ are independent---which is in general not true without Gaussianity. If we can directly sample from the feature distribution $\mc{P}$ (e.g., unlabeled data), then we can empirically estimate $\Ep \brk{x\iplus  \param\imin^\top z\iplus  z\iplus^\top  \param\imin x\iplus^\top}$ by sampling from $\mc{P}$ and using any consistent plug-in estimate $\hparam$ (e.g., run \textsc{Collab} with weights $W_i = I_{d_i}$). This will return a good estimate of the optimal weights. An interesting future direction is to prove lower bounds without the Gaussianity assumption. 

\paragraph{Generalization to non-linear models.} Recall in the Gaussian setting, the optimal weights in \textsc{Collab} are 
$
	W_i\gauss 
    = \scov\iplus /(\E_{x, y}[(\< x\iplus, \hparam_i\> - y)^2]).
$
Then, the optimal loss function in Eq.~\eqref{eq:defn-weighted-avg-est-param} becomes
\begin{align*}
	\sum_{i=1}^m \norm{\param\iplus + \scov\iplus^{-1} \scov\ipm \param\imin - \hparam_i }_{W_i\gauss}^2  =  \sum_{i=1}^m \frac{\E_{x\iplus} [ (\<x \iplus, \hparam_i\> - \<x \iplus, T_i \param\>)^2]}{\E_{x, y}[(\< x\iplus, \hparam_i\> - y)^2]}.
\end{align*}

This hints at a generalization to non-linear models. Suppose the local agents train on models $f^i(x\iplus; \param_i), \R^{d_i} \times \R^{d_i} \mapsto \mc{Y}$ and the global model $f(x; \param), \R^{d} \times \R^{d} \mapsto \mc{Y}$ satisfies for some mapping $T_i : \R^d \to \R^{d_i}$, $f(x; T_i \param) = f^i(x\iplus; \param_i)$. Consider a loss function $\ell(\cdot, \cdot): \mc{Y} \times \mc{Y} \to [0, \infty)$. Then we can consider the following way of aggregation inspired by \textsc{Collab} for linear models
\begin{align*}
	\est{\param} := \argmin_\param \sum_{i=1}^m \frac{\E_{x\iplus} \ell(f^i(x\iplus; \hparam_i), f(x\iplus; T_i \param))}{\E_{x\iplus, y}\ell(f^i(x\iplus; \hparam_i), y)}.
\end{align*}
We can consistently estimate the denominators (weights) using training time loss. An interesting future direction is to investigate the performance of this general approach for non-linear problems. 

\bibliography{more-bib}
\bibliographystyle{plainnat}
\newpage
\appendix
\section{Experimental Details}
\subsection{Census Experimental Details}\label{sec:experiment-census-details}
\begin{figure*}[t!]
    \small
        \centering
        \includegraphics[width=0.45\linewidth]{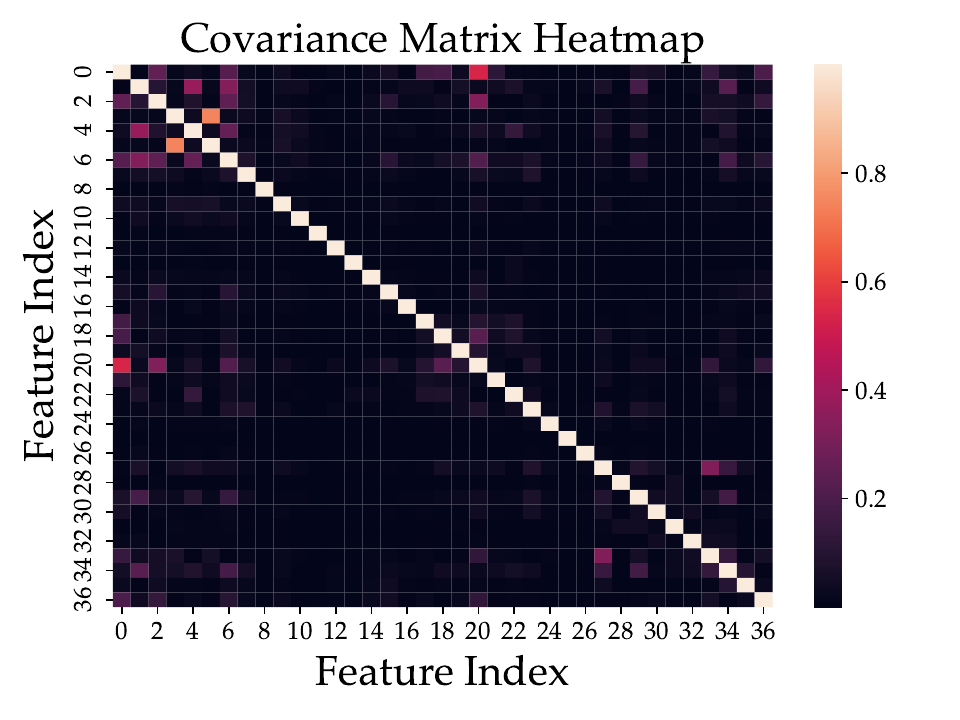}\label{fig:folk-cov-heatmap}
        \caption{Covariance Heatmap for US Census Experiment}
    \end{figure*}

We use the 15 of the 17 features in the ACSTravelTime dataset---which include Age, Educational Attainment, Marital Status, Sex, Disability record, Mobility status, Relationship, etc. More specifically, using the notation from \cite{Ding2021RetiringAN}, we choose to keep the 'AGEP', 'SCHL', 'MAR', 'SEX', 'DIS',
'MIG', 'RELP', 'RAC1P', 'PUMA', 'CIT', 'OCCP', 'JWTR', 'POWPUMA', and 'POVPIP' features. We choose to exclude the State code (ST) and Employment Status of Parents (ESP) as a quick way to bypass low-rank covariance matrix issues. We turn the columns 'MAR', 'SEX', 'DIS', 'MIG', 'RAC1P', 'CIT', 'JWTR' into one-hot vectors. We make use commute time 'JWMNP' as the target variable. We clean our data by making sure AGEP (Age) must be greater than 16, PWGTP (Person weight) must be greater than or equal to 1, ESR (Employment status recode) must be equal to 1 (employed), and JWMNP (Travel time to work) is greater than 0. We normalize our features and targets by centering and dividing by the standard deviation computed from the training data. The California datacenter has access to all of the features. The New York datacenter has access to all categories except 'AGEP'. The Texas datacenter has access to all but 'AGEP', 'SCHL'. The Florida datacenter has access to all but  'AGEP', 'SCHL', 'MAR', 'SEX', and the Illinois datacenter has access to all but 'AGEP', 'SCHL', 'MAR', 'SEX', 'DIS', 'MIG'.

\subsection{Synthetic Experiments}
\label{sec:experiment-synthetic}
\begin{figure*}[t!]
    \small
        \centering
        \subfigure[Covariance Matrix Heatmap]{
        \includegraphics[width=0.3\linewidth]{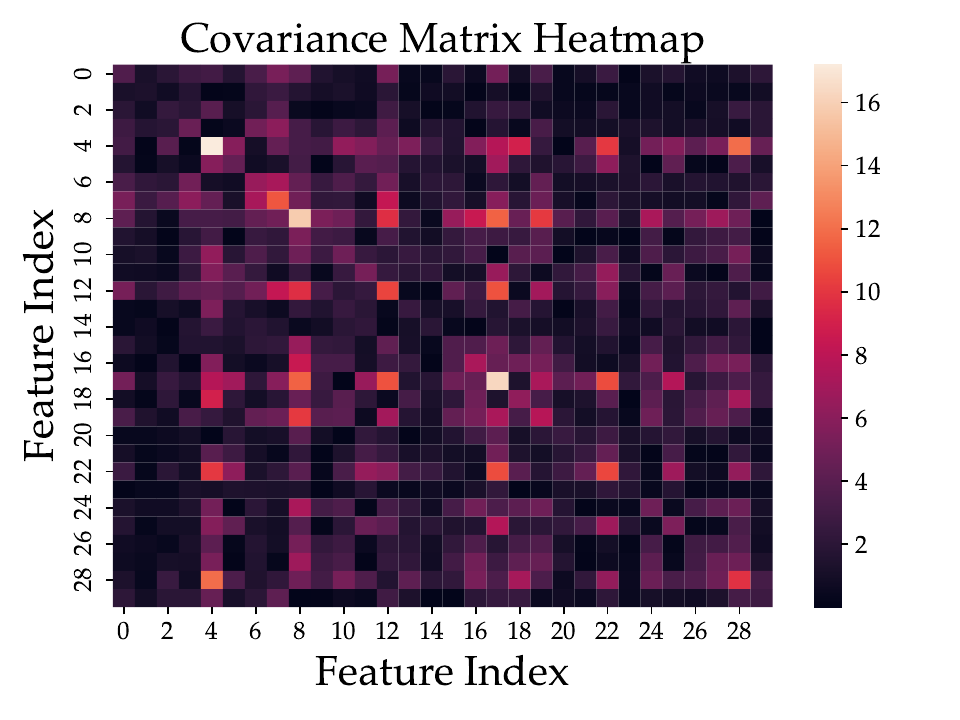}\label{fig:gauss-cov-heatmap}}
        \hspace{0.2cm}
        \subfigure[Prediction Error on $20$-feature Datatcenter]{
            \includegraphics[width=0.3\linewidth]{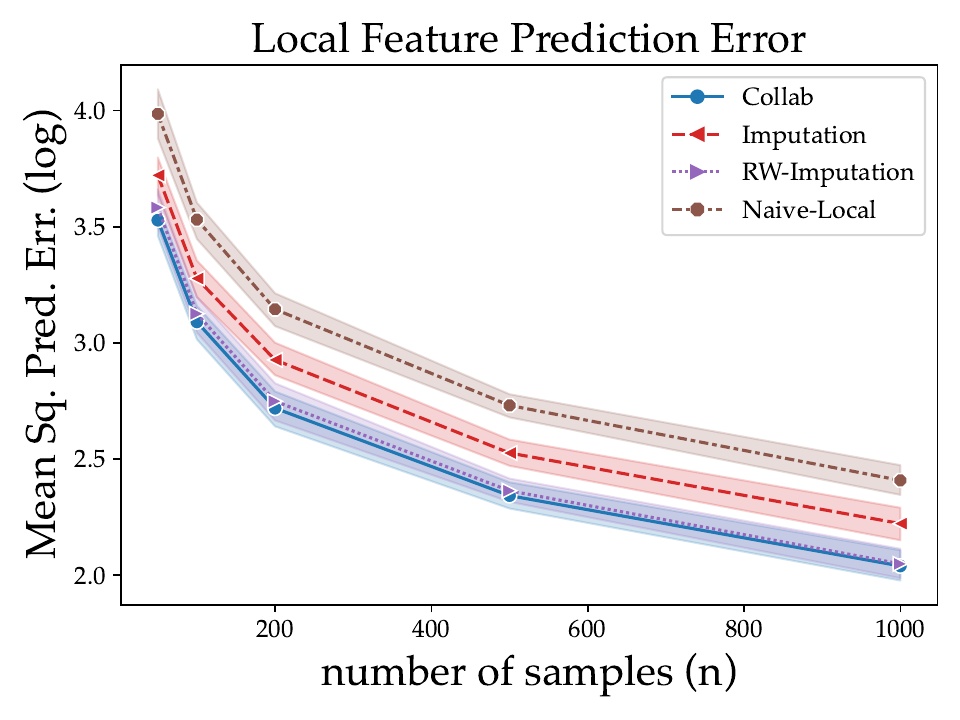}
            \label{fig:gauss-local-pred}}
        \hspace{0.2cm}
        \subfigure[Prediction Error on $30$-feature Datatcenter]{
            \includegraphics[width=0.3\linewidth]{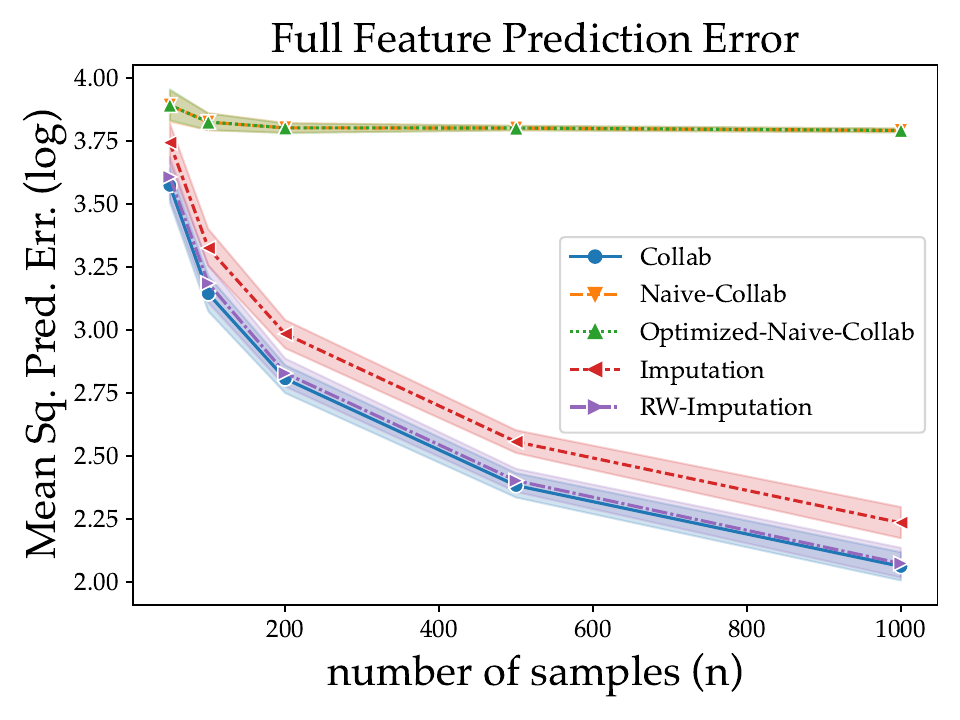}
            \label{fig:gauss-full-pred}}
        \caption{Experimental results for Synthetic Experiment}
    \end{figure*}

We start with a synthetic experiment where we generate $m=30$ agents observing some subset of $d=30$ features. Ten of the agents will have access to random subsets of $20$ of the features. The other twenty agents will have access to random subsets of $15$ of the features. Each agent will have $n$ samples which we vary in this experiment. We sample the features from a $\normal(0, \Sigma)$ distribution. We generate $\Sigma$ by first generating $d$ eigenvalues by sampling $d$ times from a uniform $[0, 1]$ distribution. We randomly select $3$ eigenvalues to multiply by $10$ and use these eigenvalues to populate the diagonal of a diagonal matrix $\Lambda$. Then we use a randomly generated orthogonal matrix $W$ to form $\Sigma \defeq W\Lambda W^T$. We plot a heatmap of $\Sigma$ in \Cref{fig:gauss-cov-heatmap}. For each method that we test, we run $20$ trials to form $95\%$ confidence intervals.

We compare our method \textsc{Collab}, against the Imputation and RW-Imputation methods we outlined in \Cref{sec:experiment-census}. 
After we train each of these methods using the data on our $30$ agents, we measure how well these methods perform in using the features of a test-agent with access to $20$ of the total $30$ features to predict outputs. 
We will also compare our methods against Naive-Local, where we only use the $n$ training datapoints of the $20$ features our test-agent has access to, also described in \Cref{sec:experiment-census}. 
We plot this result in \Cref{fig:gauss-local-pred}.

We also compare our methods in an alternative setting where the test-center of interest has access to all $30$ features. This setup models the setting where we are interested making the best possible predictions from all of the features available. In this experiment, we compare against Naive-Collab, Optimized-Naive-Collab, described in \Cref{sec:experiment-census}. We note that Optimized-Naive-Collab uses fresh labeled samples without any missing features during gradient descent, so in this sense, Optimized-Naive-Collab is more powerful than our method. We plot this result in \Cref{fig:gauss-full-pred}.

We see that reweighting is important; this is why \textsc{Collab} and RW-Imputation outperform the unweighted Imputation method. Our \textsc{Collab} method improves over the Naive-Local approach, meaning that the agents are benefiting from sharing information.
\textsc{Collab} also matches the performance of the RW-Imputation method, despite only needing to communicate the learned parameters of each agent's model, as opposed to all of the data on each agent. The Naive-Collab approaches level out very quickly, likely reflecting the fact that these methods are biased, as the covariance of our underlying data is far from isotropic.

\section{Proofs for Section~\ref{sec:upper-bounds}}

\begin{lemma} \label{lem:est-param-consistency}
	For any positive definite matrices $W_i \in \R^{d_i \times d_i}$, $i=1,2,\dots, m$, the aggregated estimator $\est{\param}$ in Eq.~\eqref{eq:defn-weighted-avg-est-param} is consistent $\est{\param} \cp \param$. In addition, if $X_i \sim \normal(0, \scov)$, we have unbiasedness $\Ep [\est{\param}] = \param$ 	where $\Ep$ is over the random data $X_i$ and noise $\noise_i$.
\end{lemma}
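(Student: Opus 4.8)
The plan is to work directly from the closed-form expression for the aggregated estimator. By first-order stationarity of the quadratic objective in \eqref{eq:defn-weighted-avg-est-param}, we have $\est{\param} = \prn{\sum_{i=1}^m T_i^\top W_i T_i}^{-1}\prn{\sum_{i=1}^m T_i^\top W_i \est{\param}_i}$, and the invertibility of $\sum_i T_i^\top W_i T_i$ follows since each $W_i \succ 0$ and $\begin{bmatrix} T_1^\top & \cdots & T_m^\top\end{bmatrix}$ has full row rank (each $T_i = \begin{bmatrix} I_{d_i} & \scov\iplus^{-1}\scov\ipm\end{bmatrix}\projm_i$ has full row rank $d_i$, and at least one agent must see each coordinate for the problem to be well posed — or one simply restricts to the span). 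The key observation is that each local OLS estimator satisfies $\est{\param}_i = (X\iplus^\top X\iplus)^{-1} X\iplus^\top y_i$, and plugging in $y_i = X_i\param + \noise_i = X\iplus\param\iplus + X\imin\param\imin + \noise_i$ gives $\est{\param}_i = \param\iplus + (X\iplus^\top X\iplus)^{-1}X\iplus^\top(X\imin\param\imin + \noise_i)$.

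\textbf{Consistency.} First I would show $\est{\param}_i \cp T_i\param$. Write $(X\iplus^\top X\iplus)^{-1}X\iplus^\top X\imin = (\frac1n X\iplus^\top X\iplus)^{-1}(\frac1n X\iplus^\top X\imin)$; by the law of large numbers $\frac1n X\iplus^\top X\iplus \cp \scov\iplus$ and $\frac1n X\iplus^\top X\imin \cp \scov\ipm$, so this term converges in probability to $\scov\iplus^{-1}\scov\ipm$. Likewise $(\frac1n X\iplus^\top X\iplus)^{-1}(\frac1n X\iplus^\top \noise_i) \cp \scov\iplus^{-1}\cdot 0 = 0$ since $\E[x\iplus\noise] = 0$. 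Hence $\est{\param}_i \cp \param\iplus + \scov\iplus^{-1}\scov\ipm\param\imin = T_i\param$ (using $T_i\param = \param\iplus + \scov\iplus^{-1}\scov\ipm\param\imin$, which is exactly the identity defining $T_i$). Then by the continuous mapping theorem applied to the closed-form expression, $\est{\param} \cp \prn{\sum_i T_i^\top W_i T_i}^{-1}\prn{\sum_i T_i^\top W_i T_i\param} = \param$. (Here the $W_i$ are fixed, so no subtlety; if one allows data-dependent consistent $\hat W_i$ as in \textsc{Collab}, Slutsky applies.)

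\textbf{Unbiasedness under Gaussianity.} When $X_i \sim \normal(0,\scov)$, I would condition on $X\iplus$. Given $X\iplus$, the conditional law of $X\imin$ is $\normal(X\iplus(\scov\iplus^{-1}\scov\ipm), \cdot)$ rowwise — more precisely $\E[X\imin \mid X\iplus] = X\iplus\scov\iplus^{-1}\scov\ipm$. Therefore $\E[\est{\param}_i \mid X\iplus] = \param\iplus + (X\iplus^\top X\iplus)^{-1}X\iplus^\top\E[X\imin\param\imin + \noise_i \mid X\iplus] = \param\iplus + (X\iplus^\top X\iplus)^{-1}X\iplus^\top X\iplus\scov\iplus^{-1}\scov\ipm\param\imin = \param\iplus + \scov\iplus^{-1}\scov\ipm\param\imin = T_i\param$, independent of the realization of $X\iplus$. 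Since the aggregation weights $W_i$ are deterministic and $T_i$ is deterministic, $\E[\est{\param} \mid \{X\iplus\}_i] = \prn{\sum_i T_i^\top W_i T_i}^{-1}\prn{\sum_i T_i^\top W_i T_i\param} = \param$, and taking expectation over $\{X\iplus\}_i$ gives $\E[\est{\param}] = \param$. (One must check integrability so that the conditional expectation and the matrix inverse interact correctly; since $T_i$ and $W_i$ are fixed this is immediate once $\E\|\est{\param}_i\| < \infty$, which holds because $(X\iplus^\top X\iplus)^{-1}X\iplus^\top$ is independent of $\noise_i$ and, conditionally on $X\iplus$, of the Gaussian $X\imin$ contribution.)

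The main obstacle is a minor one: making the conditioning argument for unbiasedness fully rigorous requires that $\E[\est\param_i\mid X\iplus]$ be well-defined, i.e. $\E\|\est\param_i\|<\infty$, which needs a brief argument that $\E\|(X\iplus^\top X\iplus)^{-1}X\iplus^\top X\imin\|<\infty$ — this follows from the independence of $X\imin$'s conditional noise from $X\iplus$ and finiteness of Gaussian moments, but should be stated. Everything else is a routine application of the law of large numbers and the continuous mapping theorem.
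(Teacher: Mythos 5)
Your proposal is correct and follows essentially the same route as the paper's proof: consistency via the closed-form expression, the law of large numbers, and the continuous mapping theorem applied to $\est{\param}_i \cp T_i\param$; and unbiasedness by conditioning on $X\iplus$ and using $\E[X\imin \mid X\iplus] = X\iplus\scov\iplus^{-1}\scov\ipm$ under Gaussianity. Your added remarks on the invertibility of $\sum_i T_i^\top W_i T_i$ and on integrability are reasonable points of rigor that the paper leaves implicit, but they do not change the argument.
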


\subsection{Proof of Lemma~\ref{lem:est-param-consistency}}
\label{sec:proof-est-param-consistency}
	For the general case, identify for $\est{\param}_i$, we can write
\begin{align*}
	& \est{\param}_i  = (X\iplus^\top X\iplus)^{-1} X\iplus y_i   = (X\iplus^\top X\iplus)^{-1} X\iplus^\top (X\iplus \param\iplus + X\imin \param\imin + \noise_i) \nonumber \\
	& = \param\iplus +  (X\iplus^\top X\iplus)^{-1}  (X\iplus^\top X\imin \param\imin + X\iplus^\top \noise_i) \nonumber \\
	& =  \param\iplus + \prn{\frac{1}{n}X\iplus^\top X\iplus}^{-1} \prn{\frac{1}{n} X\iplus^\top X\imin \param\imin  + \frac{1}{n} X\iplus^\top \noise_i}.
\end{align*}
The weak law of large numbers implies that $X\iplus^\top X\iplus/n \cp \scov\iplus$, $X\iplus^\top X\imin/n \cp \scov\ipm$ and $ \frac{1}{n} X\iplus^\top \noise_i \cp 0$. Then Slutsky's theorem gives the consistency guarantee
\begin{align*}
	\est{\param}_i \cd \param\iplus + \scov\iplus^{-1} \prn{\scov \ipm \param\imin + 0} = \param\iplus + \scov\iplus^{-1} \scov \ipm \param\imin = T_i \param,
\end{align*}
which is equivalent to $\est{\param}_i \cp T_i \param$. Substituting back into $\est{\param}$, we can obtain again from continuous mapping theorem that
\begin{align*}
	\est{\param} & = \prn{\sum_{i=1}^m T_i^\top W_i T_i}^{-1} \prn{\sum_{i=1}^m T_i^\top W_i \est{\param}_i}\cp \prn{\sum_{i=1}^m T_i^\top W_i T_i}^{-1} \prn{\sum_{i=1}^m T_i^\top W_i T_i \param} = \param.
\end{align*}

Next, we specialize to Gaussian features and show $\est{\param}$ is indeed unbiased in this case. By the tower property, we can write for each local OLS estimator,
\begin{align*}
	& \Ep [\est{\param}_i]  = \Ep [(X_{i+}^\top X_{i+})^{-1} X_{i+} y_i]  = \Ep \brk{\Ep [(X\iplus^\top X\iplus)^{-1} X\iplus^\top (X\iplus \param\iplus + X\imin \param\imin + \noise_i) \mid X\iplus]} \nonumber \\
	& = \param\iplus + \Ep \brk{(X\iplus^\top X\iplus)^{-1} X\iplus^\top  \Ep [ X\imin \mid X\iplus]} \param\imin.
\end{align*}
We want to compute $\Ep [ X\imin \mid X\iplus]$ and the key observation is that with Gaussianity in $X_i$, we have
\begin{align*}
	&\cov (x\imin - \scov\imp \scov\iplus^{-1} x\iplus, x\iplus) = \cov (x\imin, x\iplus) - \scov\imp \scov\iplus^{-1}  \cov (x\iplus, x\iplus) \nonumber \\
	& = \scov\imp - \scov\imp \scov\iplus^{-1} \cdot \scov\iplus = 0,
\end{align*}
and therefore $x\iplus$ is independent of $x\imin - \scov\imp \scov\iplus^{-1} x\iplus$, which further implies that
\begin{align*}
	& \Ep \brk{ X\imin  \mid X\iplus}  = \Ep \brk{ X\iplus \scov\iplus^{-1} \scov\ipm  \mid X\iplus}  + \Ep \brk{X \imin - X\iplus \scov\iplus^{-1} \scov\ipm \mid X\iplus}  =  X\iplus \scov\iplus^{-1} \scov\ipm.
\end{align*} 
Substituting the above property into computing the expectation of local estimates $\est{\param}_i$, it then holds
\begin{align*}
	\Ep [\est{\param}_i] & = \param\iplus + \Ep [(X\iplus^\top X\iplus)^{-1} X\iplus^\top X\iplus \scov\iplus^{-1} \scov\ipm  ] \param\imin  = \param\iplus +  \scov\iplus^{-1} \scov\ipm \param\imin = T_i \param.
\end{align*}
We can then conclude the proof as
\begin{align*}
	\Ep [\est{\param}] = \prn{\sum_{i=1}^m T_i^\top W_i T_i}^{-1} \prn{\sum_{i=1}^m T_i^\top W_i T_i \param} = \param.
\end{align*}

\subsection{Proof of Theorem~\ref{thm:upper-bound}}
\label{sec:proof-upper-bound}
	We first study the central limit theorem for local OLS estimators $\est{\param}_i$. Let the data matrices $X\iplus = [x\iplus^1, \dots, x\iplus^n]^\top$ and $X\imin = [x\imin^1, \dots, x\imin^n]$ and the noise vector $\noise_i = [\noise_i^1, \dots, \noise_i^n]^\top$, we can write out for $\est{\param}_i$ that
\begin{align}
	& \sqrt{n} \prn{\est{\param}_i - T_i \param} = \underbrace{\prn{X\iplus^\top X\iplus/n}^{-1}}_{\mathrm{(I)}}  \cdot \underbrace{\frac{1}{\sqrt{n}} X\ipm^\top \brc{(X \imin - X\iplus \scov\iplus^{-1} \scov\ipm)\param\imin  + \noise_i}}_{\mathrm{(II)}}. \label{eq:local-estimate-clt}
\end{align}
For (II), note that
\begin{align*}
	&  \frac{1}{\sqrt{n}} X\ipm^\top \brc{(X \imin - X\iplus \scov\iplus^{-1} \scov\ipm)\param\imin  + \noise_i}  = \frac{1}{\sqrt{n}} \sum_{k=1}^n x\iplus^j \brc{(x\imin^j - \scov\imp \scov\iplus^{-1} x\iplus^j)^\top \param\imin + \noise_i^j }.
\end{align*}
The summands are independent mean zero random vectors, since 
\begin{align*}
	& \E \brk{x\iplus^j \brc{(x\imin^j - \scov\imp \scov\iplus^{-1} x\iplus^j)^\top \param\imin}}  = \prn{\E \brk{x\iplus^j {x\imin^j}^\top} - \E \brk{x\iplus^j {x\iplus^j}^\top} \scov\iplus^{-1} \scov\ipm } \param\imin \nonumber \\
	& = \prn{\scov\ipm - \scov\iplus \scov\iplus^{-1} \scov\ipm} \param\imin = 0,
\end{align*}
and $\E [x\iplus^j \noise_i^j] = \E [x\iplus^j] \cdot \E [\noise_i^j] = 0$. Denote by $z\iplus^j := x\imin^j - \scov\imp \scov\iplus^{-1} x\iplus^j$ and we can infer from the above display that $x\iplus$ and $z\iplus$ are uncorrelated.
(II) is then asymptotically normal by CLT with limiting covariance (suppressing the superscript $j$ below)
\begin{align}
	& \cov \prn{x\iplus \brc{(x\imin - \scov\imp \scov\iplus^{-1} x\iplus)^\top \param\imin + \noise_i }} = \Ep \brk{x\iplus  \param\imin^\top z\iplus  z\iplus^\top  \param\imin x\iplus^\top} + \Ep \brk{\noise_i^2 x\iplus  x\iplus^\top} \nonumber \\
	& = \Ep \brk{x\iplus  \param\imin^\top z\iplus  z\iplus^\top  \param\imin x\iplus^\top} + \sigma^2 \scov\iplus := Q_i. \label{eq:def-Q-i}
\end{align}

If $X_i$ are Gaussian random vectors, we can additionally have independence between $z\iplus$ and $x\iplus$ by zero correlation. Therefore
\begin{align*}
	& \Ep \brk{x\iplus  \param\imin^\top z\iplus  z\iplus^\top  \param\imin x\iplus^\top}  = \Ep \brk{x\iplus  \param\imin^\top \Ep \brk{z\iplus z\iplus^\top}  \param\imin x\iplus^\top} \nonumber \\
	& = \param\imin^\top  \cov \prn{ x\imin- \scov\imp \scov\iplus^{-1} x\iplus} \param\imin \cdot \Ep \brk{x\iplus x\iplus^\top} = \param\imin^\top  \prn{\scov\imin - \scov\imp \scov\iplus^{-1} \scov\ipm} \param\imin \cdot \scov\iplus   = \norm{\param\imin}_{\Gamma\imin}^2 \scov\iplus,
\end{align*}
and $Q_i =  (\norm{\param\imin}_{\Gamma\imin}^2 + \sigma^2)\scov\iplus$.

We proceed to show $C(W_1, \cdots, W_n) \succeq C\opt$ under general feature distribution $\mc{P}$ and $W_i\opt := \scov\iplus Q_i^{-1} \scov\iplus$. By Slutsky theorem, (I) converges to $\scov\iplus^{-1}$ in probability and we can conclude from Eq.~\eqref{eq:local-estimate-clt} that
\begin{align} \label{eq:asymptotic-normality-est-i}
	\sqrt{n} \prn{\est{\param}_i - T_i \param} \cd \normal\prn{0, \scov\iplus^{-1} Q_i \scov\iplus^{-1}}.
\end{align}
Further from $\est{\param} = \prnbig{\sum_{i=1}^m T_i^\top W_i T_i}^{-1} \prnbig{\sum_{i=1}^m T_i^\top W_i \est{\param}_i}$, it follows that
\begin{align*}
	\sqrt{n} \prn{\est{\param}_i - \param} = \normal(0, C(W_1, \cdots, W_n))
\end{align*}
where
\begin{align}
	& C(W_1, \cdots, W_n) = \prn{\sum_{i=1}^m T_i^\top W_i T_i}^{-1} \cdot \prn{\sum_{i=1}^m T_i^\top W_i {W_i\opt}^{-1} W_i T_i}   \cdot \prn{\sum_{i=1}^m T_i^\top W_i T_i}^{-1}. \label{eq:def-C-function}
\end{align}
With the choice of $W_i = W_i\opt$, we achieve the claimed lower bound for asymptotic covariance as in this case $C(W_1, \cdots, W_m) = \prnbig{\sum_{i=1}^m T_i^\top W_i\opt  T_i }^{-1}$. It thus remains to show
\begin{align*}
	C(W_1, \cdots, W_n) \succeq \prn{\sum_{i=1}^m T_i^\top  W_i\opt T_i}^{-1} = C\opt.
\end{align*}
To prove the above claim, we construct auxiliary matrices $M_i$ as
\begin{align*}
	M_i & = \begin{bmatrix}
		T_i^\top W_i\opt T_i  & T_i^\top W_i T_i \\
		T_i^\top W_i T_i &   T_i^\top W_i  {W_i\opt}^{-1} W_i T_i
	\end{bmatrix}  = \begin{bmatrix} T_i^\top {W_i\opt}^\half  \\  T_i^\top W_i {W_i\opt}^{-\half} \end{bmatrix} \begin{bmatrix} T_i^\top {W_i\opt}^\half  \\  T_i^\top W_i {W_i\opt}^{-\half} \end{bmatrix}^\top \succeq 0.
\end{align*}
Therefore
\begin{align*}
	\sum_{i=1}^m M_i & = \begin{bmatrix}
		{C\opt}^{-1} & \sum_{i=1}^m T_i^\top W_i T_i \\
		\sum_{i=1}^m T_i^\top W_i T_i  & \sum_{i=1}^m   T_i^\top W_i  {W_i\opt}^{-1} W_i T_i
	\end{bmatrix} \succeq 0.
\end{align*}
As the Schur complement is also p.s.d.\, we can conclude with
\begin{align*}
	0 &  \preceq {C\opt}^{-1} - \prn{ \sum_{i=1}^m T_i^\top W_i T_i} \cdot \prn{\sum_{i=1}^m   T_i^\top W_i   {W_i\opt}^{-1} W_i T_i}^{-1} \nonumber \\ & \qquad \cdot \prn{ \sum_{i=1}^m T_i^\top W_i T_i} =  {C\opt}^{-1}  - C(W_1, \cdots, W_n)^{-1}.
\end{align*}

\subsection{Proof of \Cref{cor:upper-bound-local}}
\label{sec:proof-upper-bound-local}
	We first prove (i) and asymptotic normality of $\sqrt{n} (\est{\param}\collab - \param) \cd \normal\left(0, C\gauss \right)$. We point out that Theorem~\ref{thm:upper-bound} is not directly applicable as we use estimated weights that reuse the training data. We claim consistency for $\est{W}\gauss_i \cp W\gauss$, and under this premise, the proof is rather straightforward since we can write
	\begin{align*}
		\sqrt{n} \prn{\est{\param}\collab - \param} = \prn{\sum_{i=1}^m T_i^\top \est{W}_i\gauss T_i}^{-1} \prn{\sum_{i=1}^m T_i^\top \est{W}_i\gauss (\est{\param}_i - T_i\param)}.
	\end{align*}
	With the asymptotic normality established for $\sqrt{n} (\est{\param}_i - T_i \param)$ in Eq.~\eqref{eq:asymptotic-normality-est-i}, Slutsky's theorem and continuous mapping theorem, we can conclude that $\sqrt{n} (\est{\param}\collab - \param) \cd \normal\left(0, C\gauss \right)$. Now it remains to showing $\est{W}\gauss_i \cp W\gauss$, this is from Slutksy's theorem applied to $\est{W}\gauss_i = \est{\Sigma}\iplus / \est{R}_i$ and the weak law of large numbers as follows
	\begin{align*}
		\est{\Sigma}\iplus = \frac{X\iplus^\top X\iplus}{n} \cp \Sigma\iplus, \qquad \est{R}_i = \frac{1}{n}\|X\iplus  \hparam_i - y\|_2^2 \cp \Ep [\norm{x\iplus^\top T_i \param - y_i}_2^2],
	\end{align*}
	where
	\begin{align*}
		& \Ep [\norm{x\iplus^\top T_i \param - y_i}_2^2] = \Ep [\norm{x\iplus^\top \scov\iplus^{-1} \scov\ipm \param\imin - x \imin^\top \param \imin }_2^2] + \sigma^2 \nonumber \\
		& = \norm{\param\imin}_{\cov \prn{x \imin - \scov\imp \scov\iplus^{-1} x\iplus}}^2 + \sigma^2 = \norm{\param\imin}_{\Gamma\imin}^2 + \sigma^2.
	\end{align*}

	We proceed to prove (ii). Applying delta method to the mapping $\param \mapsto T_i \param, \R^d \to \R^{d_i}$ on $\est{\param}(W_1\opt, \cdots, W_m\opt)$ immediately yields the asymptotic normality for $\est{\param}_i\collab$. It only remains to show $T_i C\opt T_i^\top \preceq {W_i\opt}^{-1}$.
	
	Identify ${W_i\opt}^{-1} - T_i C\opt T_i^\top$ as the Schur complement for the block matrix
	\begin{align*}
		M = \begin{bmatrix}
			{W_i\opt}^{-1} & T_i \\
			T_i^\top & {C\opt}^{-1}
		\end{bmatrix},
	\end{align*}
	and it suffices to show $M \succeq 0$. This follows from ${C\opt} = (\sum_{i=1}^m T_i^\top W_i\opt T_i)^{-1}$ and thus
	\begin{align*}
		M & = \begin{bmatrix}
			{W_i\opt}^{-1} & T_i \\
			T_i^\top & \sum_{j=1}^m T_j^\top W_j\opt T_j
		\end{bmatrix} \succeq \begin{bmatrix}
		{W_i\opt}^{-1} & T_i \\
		T_i^\top & T_i^\top W_i\opt T_i 
	\end{bmatrix}   = \begin{bmatrix}
	{W_i\opt}^{-\half} \\
	T_i^\top {W_i\opt}^{\half}
\end{bmatrix} \begin{bmatrix}
	{W_i\opt}^{-\half} \\
	T_i^\top {W_i\opt}^{\half}
\end{bmatrix}^\top \succeq 0.
	\end{align*}
\section{Proofs for Section~\ref{sec:comparison}} \label{proof:upper-bound-comparison}

\subsection{Proof of Theorem~\ref{thm:upper-bound-imputed}} \label{proof:upper-bound-imputed}
The key part of the proof is showing $\est{\param}_i\impute = T_i^\top (T_iT_i^\top)^{-1} \est{\param}_i$. If we can have this claim established, we can make use of the following transformation of the loss function
\begin{align*}
	& \sum_{i=1}^m  \norm{T_i^\top (T_i T_i^\top)^{-1} T_i  \param - \est{\param}_i\impute  }_{W_i}^2  = \sum_{i=1}^m  \norm{T_i^\top (T_i T_i^\top)^{-1} T_i  \param - T_i^\top (T_i T_i^\top)^{-1} \est{\param}_i }_{W_i}^2 \nonumber \\
	& = \sum_{i=1}^m  \norm{ T_i  \param -  \est{\param}_i }_{(T_i T_i^\top)^{-1} T_i W_i T_i^\top (T_i T_i^\top)^{-1}}^2.
\end{align*}
This reduces the optimization problem into the same one in Eq.~\eqref{eq:defn-weighted-avg-est-param} up to weight transformation, and the same lower bound for asymptotic covariance in Theorem~\ref{thm:upper-bound} applies. Hence
\begin{align*}
	C\imputeglb(\alpha_1, \cdots, \alpha_m) \succeq C\opt.
\end{align*}
By taking $W_i= T_i^\top W_i\opt T_i$, we have the transformed weights satisfy
\begin{align*}
	(T_i T_i^\top)^{-1} T_i W_i T_i^\top (T_i T_i^\top)^{-1} = (T_i T_i^\top)^{-1} T_i^\top W_i\opt T_i (T_i T_i^\top)^{-1} = W_i\opt.
\end{align*}
From the optimality condition in Theorem~\ref{thm:upper-bound}, the equality holds under this choice of $W_i$'s.

It then boils down to proving the claim $\est{\param}_i\impute = T_i^\top (T_iT_i^\top)^{-1} \est{\param}_i$. We make use of the following two properties of Moore-Penrose pseudo inverse---for $A \in \R^{d_i \times d}$ of rank $d_i$,
\begin{align*}
	(A^\top A)^\dagger = A^\dagger (A^\dagger)^\top, \qquad A^\dagger = A^\top (AA^\top)^{-1}.
\end{align*} 
Substituting $A=(X\iplus^\top X\iplus)^{\half} T_i$ into the above displays, we then have
\begin{align*}
	& \est{\param}\impute_i  = (T_i^\top X\iplus^\top X\iplus T_i)^\dagger T_i^\top X\iplus^\top y_i \nonumber \\
	& = T_i^\top (X\iplus^\top X\iplus)^{\half} \prn{(X\iplus^\top X\iplus)^{\half} T_i T_i^\top (X\iplus^\top X\iplus)^{\half}}^{-2}   \cdot   (X\iplus^\top X\iplus)^{\half} T_i T_i^\top X\iplus^\top y_i \nonumber \\
	& = T_i^\top (X\iplus^\top X\iplus)^{\half} \prn{(X\iplus^\top X\iplus)^{-\half} (T_i T_i^\top)^{-1} (X\iplus^\top X\iplus)^{-\half}}^{2}   \cdot   (X\iplus^\top X\iplus)^{\half} T_i T_i^\top X\iplus^\top y_i \nonumber \\
	& = T_i^\top (T_i T_i^\top)^{-1} \cdot (X\iplus^\top X\iplus)^{-1} \cdot (T_i T_i^\top)^{-1} \cdot T_i T_i^\top X\iplus^\top y_i \nonumber \\
	& = T_i^\top (T_i T_i^\top)^{-1} \cdot (X\iplus^\top X\iplus)^{-1} X\iplus^\top y_i =  T_i^\top (T_i T_i^\top)^{-1} \est{\param}_i .
\end{align*}
\subsection{Proof of Theorem~\ref{thm:upper-bound-imputed-glb}} \label{proof:upper-bound-imputed-glb}

By a direct calculation, we have
\begin{align*}
	& \est{\param}\imputeglb - \param = \prn{\sum_{i=1}^m \alpha_i T_i^\top X\iplus^\top X\iplus T_i}^{-1} \prn{\sum_{i=1}^m \alpha_i T_i^\top X\iplus^\top y_i} - \param \\
	&  = \prn{\sum_{i=1}^m \alpha_i T_i^\top X\iplus^\top X\iplus T_i}^{-1} \prn{\sum_{i=1}^m \alpha_i T_i^\top X\iplus^\top (X\iplus \param\iplus + X\imin \param\imin +  \noise_i)} - \param \nonumber \\
	& =  \prn{\sum_{i=1}^m \alpha_i T_i^\top X\iplus^\top X\iplus T_i}^{-1} \prn{\sum_{i=1}^m \alpha_i T_i^\top X\iplus^\top (X\iplus \param\iplus + X\imin \param\imin -X\iplus T_i \param +  \noise_i)} \nonumber \\
	& = \prn{\sum_{i=1}^m \alpha_i T_i^\top X\iplus^\top X\iplus T_i}^{-1} \prn{\sum_{i=1}^m \alpha_i T_i^\top X\iplus^\top (X\imin \param\imin -X\iplus \scov\iplus^{-1} \scov\ipm \param\imin +  \noise_i)}.
\end{align*}
Consequently
\begin{align*}
	\sqrt{n} \prn{\est{\param}\imputeglb - \param} = \prn{\sum_{i=1}^m \alpha_i  T_i^\top \cdot \frac{1}{n} X\iplus^\top X\iplus \cdot T_i}^{-1} \cdot \prn{\sum_{i=1}^m \alpha_i T_i^\top \cdot \frac{1}{\sqrt{n}}X\iplus^\top (X\imin \param\imin -X\iplus \scov\iplus^{-1} \scov\ipm \param\imin +  \noise_i)}
\end{align*}
Following the same proof steps applied to Eq.~\eqref{eq:local-estimate-clt} in Appendix~\ref{sec:proof-upper-bound}, we can conclude that
\begin{align*}
	& \sqrt{n} \prn{\est{\param}\imputeglb - \param} \nonumber \\
	& \cd \normal \Bigg(0, \underbrace{\prn{\sum_{i=1}^m \alpha_i  T_i^\top \scov\iplus T_i}^{-1} \prn{\sum_{i=1}^m \alpha_i^2 T_i^\top Q_i T_i} \prn{\sum_{i=1}^m \alpha_i T_i^\top \scov\iplus T_i}^{-1}}_{:= C\imputeglb(\alpha_1, \cdots, \alpha_m)}\Bigg),
\end{align*}
with the same $Q_i$'s as in Eq.~\eqref{eq:def-Q-i}, and with Gaussianity of $X_i$, we also have the explicit form $Q_i =  (\norm{\param\imin}_{\Gamma\imin}^2 + \sigma^2)\scov\iplus$. Note that if $\alpha_i = 1/ (\norm{\param\imin}_{\Gamma\imin}^2 + \sigma^2)$,
\begin{align*}
	C\imputeglb(\alpha_1, \cdots, \alpha_m) = \prn{\sum_{i=1}^m \frac{T_i^\top \scov\iplus T_i}{\norm{\param\imin}_{\Gamma\imin}^2 + \sigma^2}}^{-1} = C\gauss  = C\opt.
\end{align*}
Finally, to show $C\imputeglb(\alpha_1, \cdots, \alpha_m) \succeq C\opt$, we identify from Eq.~\eqref{eq:def-C-function} that
\begin{align*}
	C\imputeglb(\alpha_1, \cdots, \alpha_m) =  C(\alpha_1 \scov_{1+}, \cdots, \alpha_m \scov_{m+}) \succeq C\opt,
\end{align*}
where the last inequality follows from Theorem~\ref{thm:upper-bound}.
\section{Proofs for \Cref{sec:lower-bounds}}
\label{sec:proofs-lower-bounds}

We will use the van Trees inequality to prove our lower bound shown. In particular, we will use a slight modification to Theorem 4 of \cite{Gassiat2014RevisitingTV}, which we state as a corollary below here. 
Throughout this section, we let $\psi: \R^d \to \R^s$ be an absolutely continuous function. The distribution $P_\param$ in the family $\{P_\param\}_{\param \in \R^d}$ is assumed to have density $p_\param$ which satisfies $\int_{\R^d} \ltwo{\nabla p_\param(x)}^2 dx < \infty$.
Let $P_\param^j$ for $j \in [m]$ denote the distribution over either $\tparam_j^n$ or $y_j \in \R^n$.
Let $\finfo_i^n(\param)$ denote the Fisher Information of $P_\param^i$, and let $\finfo^n(\param) = \sum_{i = 1}^m \finfo_i^n(\param)$ denote the Fisher Information of $P_\param$. We note that $P_\param$ is allowed to depend on $n$.

\begin{corollary}[\citet{Gassiat2014RevisitingTV}]\label{thm:van-trees}
 Let $\psi: \R^d \to \R^s$ be an absolutely continuous function such that $\nabla \psi(\param)$ is continuous at $\param_0$. For all $n$, let all distributions $P_\param$ in the family $\{P_\param\}_{\param \in \R^d}$ have density $p_\param$ which satisfies $\int_{\R^d} \ltwo{\nabla p_\param(x)}^2 dx < \infty$. 
 If $\lim_{c \to \infty}\lim_{n \to \infty} \sup_{\ltwo{h} < 1}\finfo^n(\param_0 + ch / \sqrt{n}) / n$ exists almost surely and is positive definite, denote it by $\rho$. Then
 for all sequences $(\hparam_n)_{n \geq 1}$ of statistics $S_n: \mc{X}^n \to \R^s$ and for all $u \in \R^s$
 \begin{align*}
    \liminf_{c \to \infty} \liminf_{n \to \infty} \sup_{\norm{h} < 1}
    \E^n_{\param_0 + \frac{ch}{\sqrt{n}}} \left[ 
        \left\<\sqrt{n} \left(\hparam_n - \psi\left(\param_0 + \frac{ch}{\sqrt{n}}\right)\right), u \right\>^2
    \right]
    \geq 
        u^\top \nabla \psi(\param_0)^\top \rho\inv \nabla \psi (\param_0) u
\end{align*}
\end{corollary}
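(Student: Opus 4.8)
The plan is to derive the corollary from the multivariate van Trees (Gill--Levit) inequality---the same device underlying Theorem~4 of \cite{Gassiat2014RevisitingTV}---after two reductions: collapsing the $\R^s$-valued functional $\psi$ to a scalar in the fixed direction $u$ via the chain rule, and choosing a well-scaled local prior so that both iterated limits pass cleanly through the inequality. Throughout I work on the almost-sure event on which $\rho \defeq \lim_{c\to\infty}\lim_{n\to\infty}\sup_{\ltwo{h}<1}\finfo^n(\param_0 + ch/\sqrt{n})/n$ exists and is positive definite. I fix the parameter-space direction $v \defeq \rho^{-1}\nabla\psi(\param_0)u \in \R^d$ and the scalar functional $g(\param)\defeq\langle u,\psi(\param)\rangle$, so that $\nabla g(\param) = \nabla\psi(\param)u$ and $\nabla g$ is continuous at $\param_0$; if $\nabla\psi(\param_0)u = 0$ the claimed bound is $0$ and there is nothing to prove, so assume otherwise.

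Next I build the prior. Fix once and for all a smooth probability density $\lambda$ on the open unit ball of $\R^d$ that vanishes on the boundary and has finite directional Fisher information $J_v(\lambda) \defeq \int (v^\top\nabla\lambda)^2/\lambda < \infty$. For each $n$ and $c$, let $\pi_{n,c}$ be the law of $\param_0 + ch/\sqrt{n}$ when $h \sim \lambda$; it is supported on $\{\param : \ltwo{\param - \param_0} < c/\sqrt{n}\}$, concentrates at $\param_0$ as $n\to\infty$, and a change of variables gives directional prior Fisher information $(n/c^2)J_v(\lambda)$. Applying the scalar multivariate van Trees inequality to $\{P_\param\}$ with prior $\pi_{n,c}$, estimand $g$, estimator $\langle u,\hparam_n\rangle$, and direction $v$, then multiplying by $n$ and bounding the Bayes risk below by the supremum over $\ltwo{h}<1$ (an average is at most a maximum), gives for every $n$ and $c$
\[
\sup_{\ltwo{h}<1}\E^n_{\param_0 + ch/\sqrt n}\big[\langle \sqrt{n}(\hparam_n - \psi(\param_0 + ch/\sqrt n)),\, u\rangle^2\big]
\;\ge\;
\frac{\big(\E_{\pi_{n,c}}[\,v^\top\nabla\psi(\param)u\,]\big)^2}{\tfrac1n\E_{\pi_{n,c}}[\,v^\top\finfo^n(\param)v\,] + \tfrac1{c^2}J_v(\lambda)}.
\]
As $n\to\infty$ the numerator tends to $(v^\top\nabla\psi(\param_0)u)^2$ by continuity of $\nabla\psi$ at $\param_0$ and concentration of $\pi_{n,c}$; since $\pi_{n,c}$ is supported inside $\{\param_0 + ch/\sqrt{n} : \ltwo{h}<1\}$ we have $\tfrac1n\E_{\pi_{n,c}}[v^\top\finfo^n(\param)v] \le \sup_{\ltwo{h}<1}\tfrac1n v^\top\finfo^n(\param_0+ch/\sqrt n)v$, whose iterated limit is $v^\top\rho v$ by hypothesis, while $\tfrac1{c^2}J_v(\lambda)\to 0$. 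Taking $\liminf_{n}$ then $\liminf_{c}$ therefore leaves the lower bound $(v^\top\nabla\psi(\param_0)u)^2/(v^\top\rho v)$, and substituting $v = \rho^{-1}\nabla\psi(\param_0)u$ collapses this to $u^\top\nabla\psi(\param_0)^\top\rho^{-1}\nabla\psi(\param_0)u$, as claimed. (Equivalently one may keep $v$ free throughout and optimize at the last step by Cauchy--Schwarz in the $\rho$-inner product.)

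The parts that are merely bookkeeping are the change of variables for the prior's Fisher information and the $\liminf$ interchanges, which need only the one-sided, iterated-limit form of the hypothesis. The step to be careful about is the van Trees inequality itself, applied to a family $\{P_\param\}$ allowed to depend on $n$: one must check the Gill--Levit regularity conditions in this setting---absolute continuity of $\param \mapsto p_\param$ with $\int\ltwo{\nabla p_\param}^2 < \infty$ and of $\psi$, continuity of $\nabla\psi$ at $\param_0$, and finiteness of $\E_{\pi_{n,c}}[v^\top\finfo^n(\param)v]$, which holds because $\pi_{n,c}$ has compact support inside the region where the hypothesis controls $\finfo^n/n$---and then match the resulting inequality to Theorem~4 of \cite{Gassiat2014RevisitingTV}, of which this corollary is the advertised ``slight modification.'' I expect this matching, rather than any new estimate, to be where the real care lies.
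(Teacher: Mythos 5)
Your proposal is correct and takes essentially the same route as the paper: the paper simply invokes the proof of Theorem~4 of \citet{Gassiat2014RevisitingTV} (a van Trees argument with a prior supported on the shrinking ball $\param_0 + c\,\mc{B}/\sqrt{n}$) with $n\finfo$ replaced by $\finfo^n$, while you re-derive that same argument explicitly in its scalar directional form and recover the matrix bound by choosing the optimal direction $v = \rho^{-1}\nabla\psi(\param_0)u$. The only difference is presentational (directional van Trees plus optimization over $v$ versus the matrix-valued $\Gamma_{c,n}$ of the cited theorem), and your handling of the $\finfo^n \neq n\finfo$ generalization and the iterated limits matches the paper's.
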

\begin{proof}
    The main difference between our version of the proof and the one presented in Theorem 4 of \citet{Gassiat2014RevisitingTV} is that we do not assume $\finfo^n = n \finfo$. We also select $\ell(x) = \< u, x\>^2$ in particular. All the steps and notation remain the same except with $n\finfo$ replaced with $\finfo^n$ up until equation (13), which we define with a modified choice of $\Gamma_{c,n}$
    \begin{align*}
        \Gamma_{c,n} \defeq 
        \left(\int_{\mc{B}_{p}([0], 1)}  \nabla\psi(\param_0 + ch /\sqrt{n}) q(h) dh \right)^\top
        \left(
            \frac{1}{c^2} \finfo_q + \frac{1}{n}\int_{\mc{B}_{p}([0], 1)}  \finfo^n(\param_0 + ch /\sqrt{n}) q(h) dh
        \right)\inv\\
        \times \left(\int_{\mc{B}_{p}([0], 1)}  \nabla\psi(\param_0 + ch /\sqrt{n}) q(h) dh \right).
    \end{align*}
   By definition of $\rho$, with probability 1,
    \begin{align*}
        \lim_{c \to \infty} \lim_{n \to \infty} \Gamma_{c, n} = \nabla \psi(\param_0)^\top \rho\inv \nabla \psi (\param_0)
    \end{align*}
\end{proof}

\subsection{Proof of \Cref{thm:weak-global-lb}}
\label{sec:proof-weak-global-lb}
    We will apply \Cref{thm:van-trees} and apply it to two different choices of $\psi$ to get the full feature minimax bound and missing feature minimax bound respectively. 
    For notational simplicty, let $P_\param$ denote the distribution over $\{\tparam_i^n\}_{i \in [m]}$ induced by $\param$. 
    $P_\param$ is in the exponential family, so the conditions of \Cref{thm:van-trees} are satisfied.

    We begin by computing the Fisher Information.
    Let $P_\param^j$ for $j \in [m]$ denote the distribution over
     $\tparam_j^n \in \R^{d_j}$.
    Let $\finfo_i^n(\param)$ denote the Fisher Information of $P_\param^i$, and 
    let $\finfo^n(\param) = \sum_{i=1}^m \finfo_i^n$ denote the Fisher Information of $P_\param$.
    Let $x\iplus$ denote an arbitrary row of $X\iplus$. Let $x\imin$ be drawn from $\normal(\mu\imin(x\iplus), \Gamma\imin)$. 
    Some straightforward calculations tell us 
    $\mu\imin(x\iplus) = \scov\imp \scov\iplus\inv x\iplus$ and $\Gamma\imin = \scov\imin - \scov\imp \scov\iplus\inv\scov\ipm$. From this we can deduce that $\param\imin^T x\imin$ is distributed as $\normal(\mu\imin^T \param\imin, \param\imin^T \Gamma\imin \param\imin)$; we use $\mu\imin$ in place of $\mu\imin(x\imin)$ for simplicity. And $y_i$ is distributed as $P_\param^i$ which is $\normal(\param^T\gamma, \param\imin^T \Gamma\imin \param\imin + \sigma^2 )$ where $\gamma\defeq [ x\iplus^T\projm\iplus, \mu\imin^T \projm\imin]^T$. From this we can deduce that $P_\param^i$ is $\normal\left(J_i \projm_i\param, \beta_i\inv \hscov\iplus\inv\right)$, where $\beta_i\inv \defeq \frac{\param\imin \Gamma\imin \param\imin + \sigma^2}{n}$; let $p_\param^i$ denote its density. 
    We know $\finfo^n(\param) = \sum_{i=1}^m \finfo_i^n(\param)$ due to independence. All that remains is to compute $\finfo_i^n(\param)$.
\begin{align*}
    \finfo_i^n(\param) = \int \nabla_\param \log p_\param^i(z) [\nabla_\param \log p_\param^i(z) ]^T p_\param^i(z) dz.
\end{align*}

We know that for some constant $C$,
\begin{align*}
    \log p_i^\param(z) &= 
    C + \frac{d_i}{2}\log(\beta_i) - \frac{\beta_i}{2} \ltwo{\hscov^\half\iplus\param\iplus + \hscov\iplus^\half \scov\iplus\inv \scov\ipm \param\imin - \hscov\iplus^\half z}^2 .
\end{align*}
Taking derivaties we get that 
\begin{align*}
    \nabla_{\param\iplus}  \log p_i^\param(z) &= 
    -\beta_i \left[ \hscov\iplus \param\iplus + \hscov\iplus \scov\iplus\inv \scov\ipm \param\imin - \hscov\iplus z \right]\\
    \nabla_{\param\imin}  \log p_i^\param(z) &= \left[ -\frac{d_i}{n} + \ltwo{\hscov^\half\iplus\param\iplus + \hscov\iplus^\half \scov\iplus\inv \scov\ipm \param\imin - \hscov\iplus^\half z}^2 \right]\beta_i \Gamma\imin\param\imin\\
    &\quad  + \left[\scov\imp \scov\iplus\inv \hscov\iplus \param\iplus + \scov\imp \scov\iplus\inv \hscov\iplus \scov\iplus\inv \scov\ipm \param\imin- \scov\imp \scov\iplus\inv \hscov\iplus z \right]\beta_i
\end{align*}
Let $b^2 = \ltwo{\hscov^\half\iplus\param\iplus + \hscov\iplus^\half \scov\iplus\inv \scov\ipm \param\imin - \hscov\iplus^\half z}^2$.
Now we compute the expectation over outer products:
\begin{align*}
    &\E[\nabla_{\param\iplus}  \log p_i^\param(z) \nabla_{\param\iplus}  \log p_i^\param(z)^T] = \beta_i \hscov\iplus \\
    &\E[\nabla_{\param\iplus}  \log p_i^\param(z) \nabla_{\param\imin}  \log p_i^\param(z)^T] = \beta_i^2 \hscov\iplus \beta_i\inv \hscov\iplus\inv \hscov\iplus \scov\iplus\inv \scov\ipm
    = \beta_i \hscov\iplus \scov\iplus\inv \scov\ipm\\
    &\E[\nabla_{\param\imin}  \log p_i^\param(z) \nabla_{\param\imin}  \log p_i^\param(z)^T] =\beta_i \scov\imp\scov\iplus\inv \hscov\iplus \scov\iplus\inv \scov\ipm\\
    &\qquad\qquad + \left( 
        \frac{d_i^2}{n^2} + \E[b^2] \frac{2d_i}{n}+ \E[b^4]
    \right)\beta_i^2 \Gamma\imin \param\imin \param\imin^T \Gamma\imin\\
    &\qquad= \beta_i \scov\imp\scov\iplus\inv \hscov\iplus \scov\iplus\inv \scov\ipm + \left( 
        \frac{d_i^2}{n^2} +  \frac{2\beta_i\inv d_i^2}{n}+ \beta_i^{-2}(2d_i + d_i^2)
    \right)\beta_i^2 \Gamma\imin \param\imin \param\imin^T \Gamma\imin
\end{align*}

\begin{align*}
    \finfo_i^n(\param) &= \int \nabla_\param \log p_\param^i(z) [\nabla_\param \log p_\param^i(z) ]^T p_\param^i(z) dz\\
    &= \int \projm_i^T \begin{bmatrix}
        \nabla_{\param\iplus} \log p_\param^i(z)\\
        \nabla_{\param\imin} \log p_\param^i(z)
    \end{bmatrix}
    \begin{bmatrix}
        \nabla_{\param\iplus} \log p_\param^i(z)^T &
        \nabla_{\param\imin} \log p_\param^i(z)^T
    \end{bmatrix}
    \projm_i p_\param^i(z)dz\\
    &=  \frac{n}{\sigma^2 + \param\imin^T\Gamma\param\imin}
    \projm_i^T\begin{bmatrix}
        \hscov\iplus & \hscov\iplus\scov\iplus\inv\scov\ipm\\
        \scov\imp \scov\iplus\inv \hscov\iplus  & \scov\imp\scov\iplus\inv \hscov\iplus \scov\iplus\inv \scov\ipm 
    \end{bmatrix} \projm_i 
    \\
    &\qquad\qquad +
    \projm_i^T
    \begin{bmatrix}
        0 & 0\\
        0 & \left( 
            \frac{d_i^2 \beta_i^2}{n^2} +  \frac{2\beta_i d_i^2}{n}+ 2d_i + d_i^2
        \right)\Gamma\imin \param\imin \param\imin^T \Gamma\imin
    \end{bmatrix}
    \projm_i\\
    &= \frac{n}{\sigma^2 + \param\imin^T\Gamma\param\imin} \left(Q_i + o_n(1)\right)
\end{align*}
The $o_n(1)$ term is due to strong law of large numbers.
From this we know that, with probability 1,
\begin{align*}
    \lim_{c\to \infty}\lim_{n \to \infty} \sup_{\ltwo{h} < 1}\frac{\finfo^n(\param_0 + ch / \sqrt{n})}{n} &= \sum_{i=1}^m  \frac{1}{\sigma^2 + \param\imin^T\Gamma\param\imin} Q_i =: \rho
\end{align*}

Applying \Cref{thm:van-trees} with $\psi \R^d \to \R^d$ as the identity function $\psi(x) = x$ gives the full-feature minimax lower bound. Applying \Cref{thm:van-trees} with $\psi\R^d \to \R^{d_i}$ as $\psi(x) = T_i x$ gives the missing-feature minimax lower bound.

\subsection{Proof of \Cref{thm:strong-global-lb}}
\label{sec:proof-strong-global-lb}
We will apply \Cref{thm:van-trees} and apply it to two different choices of $\psi$ to get the full feature minimax bound and missing feature minimax bound respectively. 
    For notational simplicity, we will use $P_\param$ in place of $P_\param^{y}$.
    $P_\param$ is in the exponential family, so the conditions of \Cref{thm:van-trees} are satisfied.

We begin by computing the Fisher Information.
Let $P_\param^j$ for $j \in [m]$ denote the distribution over $y_j \in \R^n$.
    Let $\finfo_i^n(\param)$ denote the Fisher Information of $\P_\param^i$, and let $\finfo^n(\param) = \sum_{i=1}^m \finfo_i^n(\param)$ denote the Fisher Information of $P_\param$.

    Let $x\kth_i, y\kth_i$ be the $k$th sample from agent $i$.
    We will let $\finfo_i\kth(\param)$ be the fisher information of $y\kth_i$. We know that $\finfo_i^n(\param) = \sum_{k=1}^n\finfo_i\kth(\param)$ by independence. 
    Some straightforward calculations tell us that $x\kth\imin$ is distributed as $\normal(\mu, \Gamma)$ where $\mu = \scov\imp \scov\iplus\inv x\kth\iplus$ and $\Gamma = \scov\imin - \scov\imp \scov\iplus\inv\scov\ipm$. From this we can deduce that $\param\imin^T x\kth\imin$ is distributed as $\normal(\mu^T \param\imin, \param\imin^T \Gamma \param\imin)$. And $y_i\kth$ is distributed as 
     $\normal(\param^T\gamma, \param\imin^T \Gamma \param\imin + \sigma^2 )$ 
    where $\gamma\defeq \projm\iplus^T x\iplus\kth + \projm\imin^T \mu$.

    Let $\phi \defeq \frac{z - \gamma^T \param}{\sigma^2 +\param\imin^T \Gamma\param\imin}$ and $\Delta \defeq \phi^2 - \frac{1}{\sigma^2 + \param\imin^T \Gamma \param\imin}$. Using $p_\param^{ik}$ denote the density of $x\kth\imin, y\kth_i$, we can calculate the derivative of the log density
    \begin{align*}
        \nabla_{\param\iplus} \log p_\param^{ik}(z) &= \frac{z - \param\iplus^T x\kth\iplus - \param\imin^T \mu}{\sigma^2 + \param\imin \Gamma \param\imin}x\kth\iplus = \phi x\kth\iplus\\
        \nabla_{\param\imin} \log p_\param^{ik}(z) &= \Delta \Gamma \param\imin + \phi \mu.
    \end{align*}

    Using the facts that $\E[\phi]=0$, $\E[\phi^2]=\frac{1}{\sigma^2 + \param\imin\Gamma\param\imin}$, $\E[\phi\Delta]=0$, and $\E[\Delta^2]= \frac{2}{(\sigma^2 + \param\imin\Gamma\param\imin)^2}$, where the expectation is an integral over $z$, we have that 
    \begin{align*}
        &\finfo_i\kth(\param) = \int \nabla_\param \log p_\param^{ik}(z) [\nabla_\param \log p_\param^{ik}(z) ]^T p_\param^{ik}(z) dz\\
        &= \int \projm_i^T \begin{bmatrix}
            \nabla_{\param\iplus} \log p_\param^{ik}(z)\\
            \nabla_{\param\imin} \log p_\param^{ik}(z)
        \end{bmatrix}
        \begin{bmatrix}
            \nabla_{\param\iplus} \log p_\param^{ik}(z)^T &
            \nabla_{\param\imin} \log p_\param^{ik}(z)^T
        \end{bmatrix}
        \projm_i p_\param^{ik}(z)dz\\
        &= \projm_i^T 
        \begin{bmatrix}
            \E[\phi^2] x\iplus\kth (x\iplus\kth)^T & \E[\phi x\iplus\kth (\Delta \Gamma \theta\imin + \phi \mu)^T]\\
            \E[(\Delta \Gamma \theta\imin + \phi \mu) (\phi x\iplus\kth)^T]
            & \E[(\Delta \Gamma \theta\imin + \phi \mu)(\Delta \Gamma \theta\imin + \phi \mu)^T] 
        \end{bmatrix}\projm_i\\
        &= 
        \frac{1}{\sigma^2 + \param\imin\Gamma\param\imin}
        \projm_i^T 
        \begin{bmatrix}
             x\iplus\kth (x\iplus\kth)^T &  x\iplus\kth \mu^T\\
            \mu(x\iplus\kth)^T
            & \mu\mu^T +  \frac{2}{\sigma^2 + \param\imin^T\Gamma\param\imin} \Gamma \param\imin \param\imin^T\Gamma
        \end{bmatrix}\projm_i\\
        &= 
        \frac{1}{\sigma^2 + \param\imin\Gamma\param\imin}
        \projm_i^T
        \begin{bmatrix}
             x\iplus\kth (x\iplus\kth)^T &  x\iplus\kth (x\iplus\kth)^T \scov\iplus\inv \scov\ipm\\
             \scov\imp \scov\iplus\inv x\kth\iplus(x\iplus\kth)^T
            & \scov\imp \scov\iplus\inv x\kth\iplus(x\iplus\kth)^T\scov\iplus\inv \scov\ipm + \frac{2}{\sigma^2 + \param\imin^T\Gamma\param\imin} \Gamma \param\imin \param\imin^T\Gamma
        \end{bmatrix}\projm_i.
    \end{align*}
    From this we can sum over 
    \begin{align*}
       \finfo_i^n(\param)&= \sum_{k=1}^n\finfo_i\kth(\param)\\ 
&= \frac{n}{\sigma^2 + \param\imin^T\Gamma\param\imin}\projm_i^T\begin{bmatrix}
    \hscov\iplus &  \hscov\iplus \scov\iplus\inv \scov\ipm\\
    \scov\imp \scov\iplus\inv \hscov\iplus
   & \scov\imp \scov\iplus\inv\hscov\iplus\scov\iplus\inv \scov\ipm + \frac{2}{\sigma^2 + \param\imin^T\Gamma\param\imin} \Gamma \param\imin \param\imin^T\Gamma
\end{bmatrix} 
    \projm_i\\
        &= \frac{n}{\sigma^2 + \param\imin^T\Gamma\param\imin}\left( Q_i + o_n(1) + \projm_i^T\begin{bmatrix}
                0 & 0\\
                0 & \frac{2}{\sigma^2 + \param\imin^T\Gamma\param\imin} \Gamma \param\imin \param\imin^T\Gamma
            \end{bmatrix}\projm_i
            \right)
    \end{align*}
The $o_n(1)$ term is due to strong law of large numbers. From this we know that, with probability 1
    \begin{align*}
        &\lim_{c\to \infty}\lim_{n \to \infty} \sup_{\ltwo{h} < 1}\frac{\finfo^n(\param_0 + ch / \sqrt{n})}{n} \\&\qquad\qquad= \sum_{i=1}^m \frac{1}{\sigma^2 + \param\imin^T\Gamma\param\imin}\left( Q_i + \projm_i^T\begin{bmatrix}
            0 & 0\\
            0 & \frac{2}{\sigma^2 + \param\imin^T\Gamma\param\imin} \Gamma \param\imin \param\imin^T\Gamma
        \end{bmatrix}\projm_i
        \right) =: \rho
    \end{align*}
    Applying \Cref{thm:van-trees} with $\psi \R^d \to \R^d$ as the identity function $\psi(x) = x$ gives the full-feature minimax lower bound. Applying \Cref{thm:van-trees} with $\psi\R^d \to \R^{d_i}$ as $\psi(x) = T_i x$ gives the missing-feature minimax lower bound. 

    One final transformation remains to get the form of this lower bound to match the one in the theorem statement. We know that from Cauchy-Schwartz that for all $u\in\R^{d - d_i}$
    \begin{align*}
        \frac{u^T \Gamma \param\imin \param\imin^T \Gamma u}{\param\imin^T \Gamma \param\imin} = \frac{(u^T \Gamma^\half \Gamma^\half \param\imin)^2}{\param\imin^T \Gamma \param\imin} \leq u^T \Gamma u.
    \end{align*}
    Using this fact and the definition of $\Gamma$ and $Q_i$ we have that 
    \begin{align*}
        \frac{1}{\sigma^2 + \param\imin^T\Gamma\param\imin}\left( Q_i + \projm_i^T\begin{bmatrix}
            0 & 0\\
            0 & \frac{2}{\sigma^2 + \param\imin^T\Gamma\param\imin} \Gamma \param\imin \param\imin^T\Gamma
        \end{bmatrix}\projm_i \right) \preceq  \frac{2n}{\sigma^2 + \param\imin^T\Gamma\param\imin} \Sigma.
    \end{align*}
    Using this bound gives our final result.

\subsection{Proof of \Cref{cor:lower-bounds-comparable}}
The existence of the limits is a consequence of strong law of large numbers. To further show the inequality in the limit, we note that
\begin{align*}
	& \frac{1}{m} \sum_{i=1}^m \prn{\frac{\scov }{\sigma^2 + \param\imin^T\Gamma\imin \param\imin}  - \frac{T_i^\top \Sigma\iplus T_i}{\sigma^2 + \param\imin^T\Gamma\imin \param\imin}} = \frac{1}{m} \sum_{i=1}^m \frac{\Pi_i^\top \begin{bmatrix}
			0 & 0\\
			0 & \Gamma\imin
		\end{bmatrix} \Pi_i}{\sigma^2 + \param\imin^T\Gamma\imin \param\imin} \nonumber \\
	& \preceq \frac{1}{m} \sum_{i=1}^m \frac{\Pi_i^\top \begin{bmatrix}
			0 & 0\\
			0 & \Gamma\imin
		\end{bmatrix} \Pi_i}{\sigma^2} \preceq \frac{1}{m} \sum_{i=1}^m \frac{\Pi_i^\top \begin{bmatrix}
			0 & 0\\
			0 & \scov\imin
		\end{bmatrix} \Pi_i}{\sigma^2}  \to \frac{p \mathrm{diag}(\scov) + p^2 (\scov - \mathrm{diag}(\scov))}{\sigma^2},
\end{align*}
where the last step holds with probability one by strong law of large numbers. This is true as by our random missing model, $\scov_{ij}$ is not observed with probability $p$ if $i=j$, and $p^2$ if $i \neq j$. We can further derive that
\begin{align*}
	& \frac{p \mathrm{diag}(\scov) + p^2 (\scov - \mathrm{diag}(\scov))}{\sigma^2} \preceq \frac{p\lambda_1(\Sigma) I}{\sigma^2} \preceq \frac{p \kappa \scov}{\sigma^2} \preceq \frac{p\lambda_1(\Sigma) I}{\sigma^2} \nonumber \\
	& \stackrel{\mathrm{(i)}}{\preceq} \frac{p \kappa (\sigma^2 + \norm{\param}_\scov^2)}{\sigma^2} \frac{1}{m} \sum_{i=1}^m \frac{\scov }{\sigma^2 + \param\imin^T\Gamma\imin \param\imin}.
\end{align*} 
In (i), we make use of the fact that
\begin{align*}
	\scov \succeq \Pi_i^\top \begin{bmatrix}
		0 & 0\\
		0 & \Gamma\imin
	\end{bmatrix} \Pi_i
\end{align*} 
and therefore $\norm{\param}_\scov^2 \geq \norm{\param\imin}_{\Gamma\imin}^2$. By our choice of $p \leq \half\kappa^{-1} (1 + \norm{\param}_\scov^2 / \sigma^2)^{-1}$, we can conclude that
\begin{align*}
	\lim_{m \to \infty} \frac{1}{m} \sum_{i=1}^m \frac{T_i^\top \Sigma\iplus T_i}{\sigma^2 + \param\imin^T\Gamma\imin \param\imin} \succeq \lim_{m \to\infty} \frac{1}{m} \sum_{i=1}^m \frac{\scov/2}{\sigma^2 + \param\imin^T\Gamma\imin \param\imin}.
\end{align*}
\end{document}